%% file: ET_arxiv1.tex
\documentclass[11pt,reqno]{amsart}
\usepackage[cp1251]{inputenc}
\usepackage[english]{babel}
\usepackage{amsmath}
\usepackage{amssymb}
\usepackage{amsfonts}
\usepackage{graphicx}
\usepackage{xcolor}
\usepackage[colorlinks]{hyperref}

\input{preamble}

\usepackage{titlesec}
\titlespacing{\section}{0pt}{2.5ex}{1.5ex}
\titlespacing{\subsection}{0pt}{1.5ex}{1ex}
\titlespacing{\subsubsection}{0pt}{1.5ex}{1ex}
\titleformat{\section}{\large\bfseries\centering}{\thesection}{1em}{}
\titleformat{\subsection}[runin]{\bfseries}{\thesubsection.}{0.5em}{}[.\mbox{\ }]
\titleformat{\subsubsection}[runin]{\bfseries}{\thesubsubsection.}{0.4em}{}[.\mbox{\ }]

\newcommand\orcidicon[1]{\href{https://orcid.org/#1}{\includegraphics[scale=0.02]{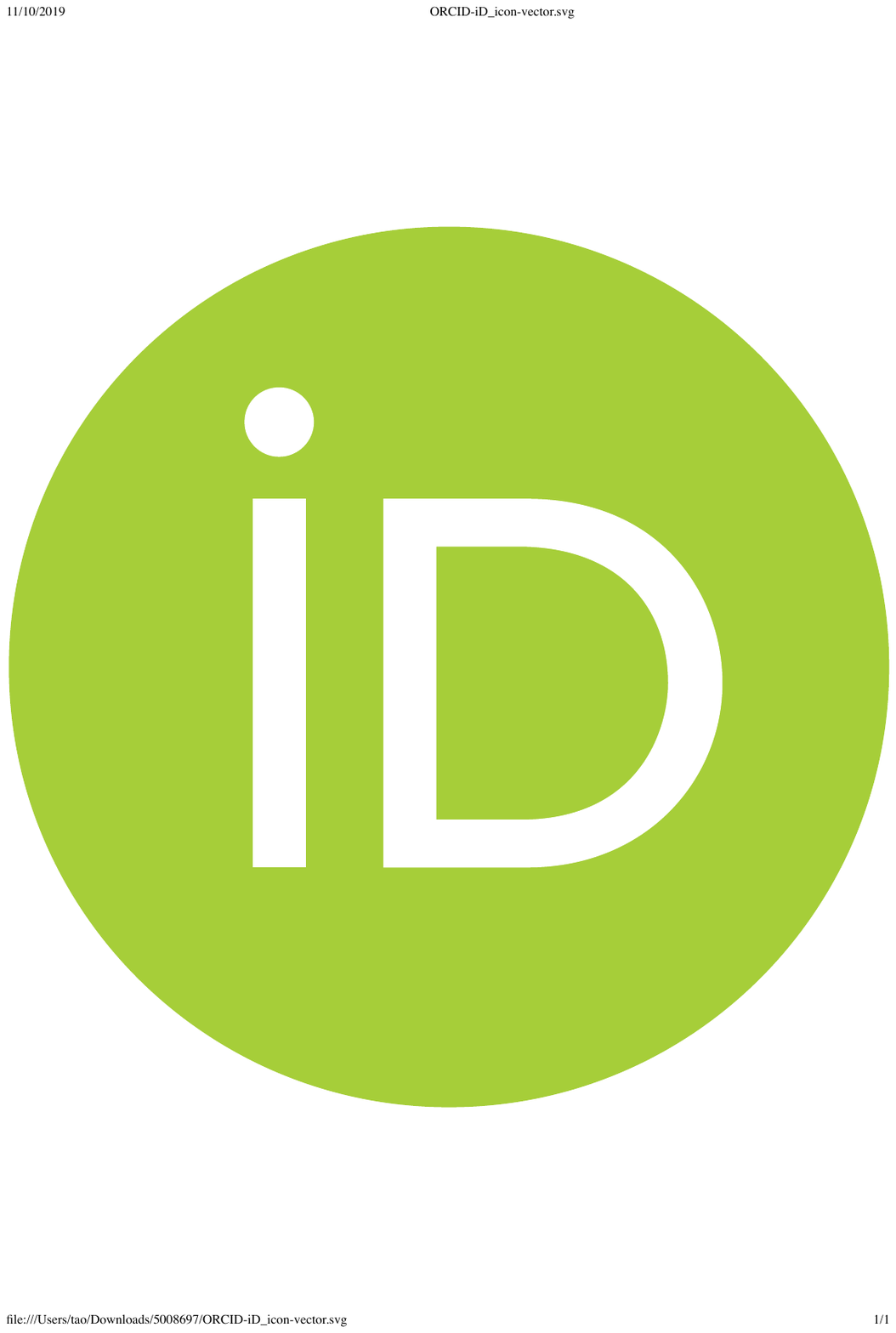}}}

\newtheorem{lemma}{Lemma}
\newtheorem{theorem}{Theorem}
\newtheorem{definition}{Definition}
\newtheorem{corollary}{Corollary}
\newtheorem{proposition}{Proposition}
\definecolor{myblue}{rgb}{0.03, 0.27, 0.79}

\hypersetup{linkcolor=blue,filecolor=black,urlcolor=blue, citecolor=blue}

\begin{document}
\renewcommand{\refname}{References}

\thispagestyle{empty}

\title[\sc Escape Time from Local Optima]{\large 
Asymptotical Analysis of the $(1+(\lambda,\lambda))$~GA Escape Time from Local Optima on Jump Functions}

\author[A.V. EREMEEV, V.A. TOPCHII]{{\bf A.V. Eremeev}\protect\orcidicon{0000-0001-5289-7874}{\bf V.A. Topchii}\protect\orcidicon{0000-0003-4310-5665}}

\address{Antion Valentinovich Eremeev  % Контактные данные всех авторов и место работы указываются только на английском языке
\newline\hphantom{iii} Sobolev Institute of Mathematics,
\newline\hphantom{iii} pr. Koptyuga, 4,
\newline\hphantom{iii} 630090, Novosibirsk, Russia;%
\newline\hphantom{iii} Dostoevsky Omsk State University,
\newline\hphantom{iii} 6, Frunze str., 
\newline\hphantom{iii} 644099, Omsk, Russia}%
\email{\textcolor{blue}{eremeev@ofim.oscsbras.ru}}%
\address{Valentin Alekseevich Topchii  % Контактные данные всех авторов и место работы указываются только на английском языке
\newline\hphantom{iii} Sobolev Institute of Mathematics,
\newline\hphantom{iii} pr. Koptyuga, 4,
\newline\hphantom{iii} 630090, Novosibirsk, Russia}%
\email{\textcolor{blue}{topchij@ofim.oscsbras.ru}}%

\thanks{\sc Eremeev A.V., Topchii V.A.
Escape Time from Local Optima}
%\thanks{\copyright \ 2023 Eremeev A.V., Topchii V.A.}
\thanks{\rm The research was supported by Russian Science Foundation
grant N 25-21-00335, https://rscf.ru/project/25-21-00335/.}
%\thanks{\it Received January, 1, 2023, Published December, 31, 2023}%

\vspace{1cm}
\maketitle

\bigskip
\bigskip

\begin{quote}
\noindent{\bf Abstract:} The paper develops the approach to the runtime analysis of evolutionary algorithms on the basis of limit theorems from probability theory. We consider the family of Jump$_k$ benchmark functions, defined on the search space of binary strings of length~$n$, parametrized by the integer~$k$, which have a plateau of multiple local optima at the Hamming distance~$k$ from a unique global optimum. 

In this work, we consider the genetic algorithm $(1+(\lambda,\lambda))~GA$ from (Doerr, Doerr and Ebel, 2015) with tunable parameters of the mutation rate~$p$, crossover bias~$c$, and two intermediate population sizes $\lambda_M$ and $\lambda_C$. 
We study the time it escapes from the plateau of local optima and reaches the global optimum in the case of Jump$_k$ fitness function
and tighten the upper bounds on the expected escape time, known from the work of Antipov, Doerr and Karavaev (2022). 
The obtained bounds also apply to a wider range of algo\-rith\-mic parameters.
The main result of this work applies to the case when $k\to \infty$ as $n \to \infty.$ The case of finite $k$ is investigated quite simply and considered tangentially. \\ 

\noindent{\bf Keywords:} Genetic Algorithm, Optimization Time, Limit Theorems, Local Optima.
 \end{quote}

\bigskip

\section{Introduction}
This work contains an extended version of the theoretical analysis 
presented in the report at the GECCO-26 conference (see \cite{GEC26}).

Given a pseudo-Boolean maximization problem
$f:\{0,1\}^n \to \mathbb{R}$, an evolutionary algorithm~(EA) seeks for the maximum of~$f$, using a population of one or more individuals, corresponding to the points in the solution space $\{0,1\}^n,$ and the {\em fitness} of the individuals is equal to the value of the objective
function $f(x)$. New test points in the EA are produced using
the random operators of  {\em mutation} and {\em crossover}.
When using the latter, the EA is usually called
{\em a genetic algorithm}. One of the most
frequently used mutation operators is the {\it standard
mutation}~\cite{Gold89}, where each bit of the given string $\mathrm{x}\in
\{0,1\}^n$,  independently of the other bits, changes its value with a given probability $p$. In this paper, we will assume that in the case of standard mutation, at each iteration of the EA, the number of  bits~$\ell$ to be mutated is selected, using the binomial distribution $\Bin(n, p)$ (next, we denote $\ell \in {\rm Bin}(n,p)$). Each new descendant is obtained from
the parent individual by making changes in randomly selected
$\ell$ bits.

The authors of~\cite{DDE15} developed the $(1 + (\lambda, \lambda))$~genetic algorithm (GA) for pseudo-Boolean optimization, where a crossover operator is able to  eliminate some ``unsuccessful'' mutations. At each iteration of the $(1+(\lambda, \lambda))$~GA, $\lambda$ offspring are generated from a single parent individual independently of each other, at the Hamming distance $\ell$ from the parent, $\ell \in {\rm Bin}(n,p)$.
%$n$ is the length of the bit-string and $p$ is the mutation parameter.
After that, the best individual in terms of fitness is selected from these $\lambda$ solutions and the crossover operator is applied, combining it with the parent, given a crossover parameter~$c$:  With probability~$c$ the crossover operator uses bits from the best child, and with probability $1-c$, it uses bits from the parent solution. This way $\lambda$ individuals are created, and the best of these $\lambda$ individuals is accepted as a new parent, provided it is at least as fit as the previous parent. Otherwise the current individual remains unchanged.
Theoretical analysis of optimization time showed that the algorithm $(1 + (\lambda, \lambda))$~GA for many values of configurable parameters turns out to be asymptotically faster on the \onemax fitness function than most classical evolutionary algorithms.
Here and below, $\onemax(x):=\sum_{i=1}^n x_i$ for any $\mathrm{x}\in\{0,1\}^n$.

Formally, we consider here a more general 
$(1+(\lambda_{M}, \lambda_{C}))$~GA, where $\lambda_{M}$ and $\lambda_{C}$ are some, 
possibly different, sequences of population sizes at mutation and crossover stage, respectively, but the main applications are traditionally given for the case of $\lambda_{M}=\lambda_{C}=\lambda$. Dropping the $\lambda_{M}=\lambda_{C}=\lambda$ condition naturally leads to a very wide class of asymptotic results. To simplify the notation, we will refer to $(1 + (\lambda_{M}, \lambda_{C}))$~GA as $(1 + (\lambda, \lambda))$~GA.

%In particular, as shown in~\cite{DD18} for the \onemax, the optimal choice of fixed mutation parameter~$p$ 
%for the entire running time gives
%the optimization time
%$$
%{E[T]=\Theta\left(n \sqrt{{\log(n)\log\log\log(n)}/{\log\log n}}\right).}
%$$ 

To describe the asymptotic behaviour of quantities associated with
unbounded growth of the dimension~$n$, we will use
the standard notations: $g(n)=O(t(n))$ and $g(n)=o(t(n))$, if the upper limit of the ratio $t(n)/g(n)$ is bounded or converges to 0, respectively; ${g(n)\sim t(n)}$, if $g(n)=t(n)(1+o(1))$. When using the latest asymptotic relations, we will usually omit the formal condition ``for $n\to\infty$''.
Avoiding the convention accepted in computer science (see, e.g.,~\cite{Cormen}, Ch.~3), we {\em do not} assume that the 
value of $g(n)=o(t(n))$ is non-negative or non-negative for sufficiently large~$n$.
%It is sayed that a function $t(n)$ belongs to the set $ \Theta(g(n))$ if there exist positive constants $c_1$ and $c_2$, as well as a non-negative integer $n_0$ such that $c_2 g(n) \le t(n) \le c_1 g(n)$ for all $n\ge n_0$ (see, e.g.,~\cite{Cormen}, Ch.~3).

In~\cite{ADK22},  the first mathematical runtime analysis for the $(1+(\lambda, \lambda))$~GA optimizing a multimodal optimization problem, namely the the classical Jump$_k$ benchmark, was carried out. It was observed there that a combination of aggressive mutation with crossover as repair mechanism works even better on this benchmark. 
The $(1+(\lambda, \lambda))$~GA can optimize jump functions with gap size $k \le n/4$ in expected time at most $n^{(k+1)/2}e^{O(k)}k^{-k/2},$ 
while the runtime of many classic mutation-based algorithms is of order~$n^k$.
For the problem of leaving the local optimum of \jump, the authors of~\cite{ADK22} perform parameters tuning that leads to a
runtime $(n/k)^{k/2}e^{O(k)}$.

It is shown in~\cite{ABD24} that choosing {\em all} parameter values  (population size $\lambda$, mutation rate $p$, and crossover bias $c$) in each iteration randomly, using power-law distribution, is a robust approach. It was proved that this algorithm
on the one hand can imitate simple hill-climbers on problems like \onemax, \leadingones, or Minimum Spanning
Tree. On the other hand, the algorithm is also efficient on \jump functions, where the best static parameters are very different from those necessary to optimize simple problems.

In the present paper, we consider the $(1+(\lambda, \lambda))$~GA from~\cite{DDE15} with tunable parameters of mutation rate~$p$, crossover bias~$c$, and two intermediate population sizes $\lambda_M$ and $\lambda_C$, and study the time to escape from the local optima in the case of Jump$_k$ fitness function when $np$  tends to infinity. 
The main result of this work is a tightened upper bound on the time to escape a local optimum $(n/k)^{k/2}e^{O(k)}$ from~\cite{ADK22}.
Besides that, the obtained bound applies to a wider range of algorithm's parameters.

\section{Preliminaries}

For the sake of compactness, the number of ones in a bit string $\mathrm{x}$ will be denoted as $|\mathrm{x}|$, meaning the $L^{1}$ norm of the vector~$\mathrm{x}$, or Hamming norm.

When introducing new notations in formulas, the symbol $:=$ is used, and the colon is on the side of the quantity being defined. 
Let us denote by $\mathbb{N}_{0,n}$ the set of all integers from 0 to $n$.

The notation $\ell\in\mathcal{B}(n,p)$, where $n\in \mathbb{N}$ and $p\in(0,1)$, means that the random variable $\ell$ has a binomial distribution with probability distribution
$$
\mathbf{P}(\ell=l)=\genfrac(){0pt}{0}{n}{l}p^{l}q^{n-l}=:P_{p}(n,l), \ l\in\mathbb{N}_{0,n}:=\{0,1,\ldots,n\},
$$
where $q=1-p$. 
%Throughout what follows, we assume that the probability $p=p(n)$ is bounded away from~1. In other words, there exists $\epsilon_{p}\in(0,1)$ such that $p<1-\epsilon_{p}$. It is well known that $\mathbf{E}[\ell]=np$ and ${\rm Var}[\ell]=npq$.

The probability to have $s$ successes in $l$ random draws, without replacement, from a finite set of $n$ objects, that contains exactly $k$ objects with a required feature is described by the hypergeometric distribution. The probability mass function of a random variable that follows the hypergeometric distribution is given by
$$
P_{n,k}(l,s):={{k \choose s}{n-k \choose l-s}}\Big/{{n \choose l}}.
$$

\subsection{The $(1+(\lambda,\lambda))$~Genetic Algorithm}

We consider the same version of the $(1+(\lambda,\lambda))$~GA as in~\cite{ADK22}, which is analogous to the one initially proposed in~\cite{DDE13}. The algorithm has the following outline. 

{\em \textbf{Algorithm 1.} $(1+(\lambda,\lambda))$~GA  with the offspring population sizes $\lambda_M$ and $\lambda_C$, mutation rate~$p$,  and crossover bias $c$,
maximizing an objective function~$f:\{0,1\}^{n}\to \mathbb{R}$.}

\medskip 
\ \,1. $\mathrm{x}$ $\leftarrow$ random bit string of length $n$;

\ \,2. \ \textbf{while} \textit{not terminated} \textbf{do}

\ \,3.  \quad Mutation phase:

\ \,4.  \quad Choose $\ell \in {\rm Bin}(n,p)$;

\ \,5.  \quad \textbf{for} $i\in[1..\lambda_M]$ \textbf{do}

\ \,6.  \quad   \quad  $\mathrm{x}^{(i)}$ $\leftarrow$ a copy of $\mathrm{x}$;

\ \,7.  \quad   \quad   Flip $\ell$ bits in $\mathrm{x}^{(i)}$ chosen uniformly at random;

\ \,8.  \quad \textbf{end}

9.  \ \quad $\mathrm{x}' \leftarrow {\rm arg\, max}_{\mathrm{z}\in
\{\mathrm{x}^{(1)},...,\mathrm{x}^{(\lambda_M)}\}} f(\mathrm{z})$;

10.   \quad Crossover phase:

11.  \quad \textbf{for} $i\in[1..\lambda_C]$ \textbf{do}

12.  \quad   \quad  Create $\mathrm{y}^{(i)}$ by taking each bit from $\mathrm{x}'$ with   

13. \ \ \, \;  \quad   \quad probability $c$ and from $\mathrm{x}$ with probability $1-c$;

14.  \quad \textbf{end}

15.  \quad  $\mathrm{y} \leftarrow {\rm arg\, max}_{\mathrm{z}\in
\{\mathrm{y}^{(1)},...,\mathrm{y}^{(\lambda_C)}\}} f(\mathrm{z})$;

16.  \quad  \textbf{if} $f(\mathrm{y}) \geq f(\mathrm{x})$ \textbf{then}

17.   \quad   \quad $\mathrm{x} \leftarrow \mathrm{y}$;

18.   \quad \textbf{end}

19. \textbf{end}
\medskip 

In what follows, we will refer to execution of Lines 3--18 of Algorithm~1 as {\em one iteration} of the algorithm. Note that in the one iteration may be considered in two perspectives: (i)~treating $\ell$ as a random variable, or (ii) considering a specific sample $\ell=l$. Usage of notation $\ell$ or $l$ will clearly identify either of these two options below.  
We will refer to Lines~3--9 of Algorithm~1 as the {\em mutation phase,} and Lines~10--15 as the {\em crossover phase}.

\subsection{Jump Functions}

The basic benchmark function
$
\onemax(\mathrm{x})=\sum\nolimits_{i=1}^{n} x_i,
$
has been deeply studied in the literature on the theory of EAs.  In terms of \onemax it is easy to define the function $\jump_k,$ where $k$ is a parameter $2\le k\le n$:
\begin{align*} 
  &\jump_k(\mathrm{x})
    := \begin{cases}
         n + 1     & \text{if } \onemax(\mathrm{x})=n\\
         k + \onemax(x) & \text{if } \onemax(\mathrm{x})\leq n-k\\
         n - \onemax(\mathrm{x}) & \text{otherwise}
        \end{cases}.
\end{align*}

The global optimum of $\jump_k$ corresponds to the bit string of all ones, which we denote by $\mathrm{x_{m}}$. The infinite family of functions $\jump_k$ serves as an example of a multimodal function, where all local optima are equally distant from the unique global optimum and a number of papers are devoted to the runtime analysis of evolutionary algorithms on these functions, see the references in~\cite{ADK22}.

\section{Technical Definitions }

Let an individual $\mathrm{x}$ have the norm $|\mathrm{x}|=n-k$, i.e. $\mathrm{x}$ is in a local maximum of $\jump_k$.

In what follows, $p_{M}^{(1)}(n,l,k,p)$ denotes
probability that $k$ zero bits and $l-k$ one bits are flipped during a single mutatioin Line 7 of Algorithm 1 to the individual $\mathrm{x}$. We denote this event by $M^{(1)}(n,l,k,p)$. When the event $M^{(1)}(n,l,k,p)$ occurs, the resulting individual $\widetilde{\mathrm{x}}$ has the norm $|\widetilde{\mathrm{x}} |=n-l+k$. Otherwise, $\widetilde{\mathrm{x}}$ has the norm $|\widetilde{\mathrm{x}}|<n-l+k$.

The explicit form of the probability of event $M^{(1)}(n,l,k,p)$ is
\begin{equation} \label{main4}
p_{M}^{(1)}(n,l,k,p)=P_{n,k}(l,k)=\dfrac{(n-k)!l!}{(l-k)!n!},\ \ k\leq l,
\end{equation}
here $P_{n,k}(l,s)$ is the hypergeometric distribution and we assume $P_{n,k}(l,k)=0$ for $l<k$. 
\newline
Let $p_{M}^{(\lambda_{M})}(n,l,k,p)$ be the probability that, given $\lambda_{M}$ independent mutations of $\mathrm{x}$, the event $M^{(1)}(n,l,k,p)$ occurs at least once throughout the mutation phase. We denote this event by $M^{(\lambda_{M})}(n,l,k,p)$. The event $M^{(\lambda_{M})}(n,l,k,p)$ leads to the choice of the individual $\mathrm{x}'$ with the norm $|\mathrm{x}'|=n-l+k$. If not all $k$ positions with ``0'' components in $\mathrm{x}$ have mutated, then $\widetilde{\mathrm{x}}$ has a norm $|\widetilde{\mathrm{x}}|<n-l+k$.
According to this notation, the following relation holds:
\begin{equation} \label{Mlam}
p_{M}^{(\lambda_{M})}(n,l,k,p)=1-\Big(1-p_{M}^{(1)}(n,l,k,p)\Big)^{\lambda_{M}}.
\end{equation}

Let $p_{C}^{(1)}(n,l,k,p,c)=c^{k}(1-c)^{l-k}$ be the probability that in the crossover phase, within a single crossover of $\mathrm{x}$ and $\mathrm{x}'$ with $|\mathrm{x}'|=n-l+k$ in Lines~12, 13, only the $k$ zero-components in $\mathrm{x}$ are replaced (i.e. these components are selected from $\mathrm{x}'$). We denote this event by $C^{(1)}(n,l,k,p,c)$.

Let $p_{C}^{(\lambda_{C})}(n,l,k,p,c)$ be the probability that throughout the crossover phase, conditioned that the event $M^{(\lambda_{M})}(n,l,k,p)$ occurred (i.e., in the case of $|\mathrm{x}|=n-k$ and $|\mathrm{x}'|=n-l+k$), the event $C^{(1)}(n,l,k,p,c)$ will occur at least once. We denote this event by $C^{(\lambda_{C})}(n,l,k,p,c)$.
According to this notation, we have
\begin{eqnarray} \label{Clam}
p_{C}^{(\lambda_{C})}(n,l,k,p,c){=}1{-}\Big(1-p_{C}^{(1)}(n,l,k,p,c)\Big)^{\lambda_{C}}{=}
1-\big(1{-}c^{k}(1-c)^{l-k}\big)^{\lambda_{C}}.
\end{eqnarray}

Let us denote by $A_{\lambda_{M},\lambda_{C}}(n,l,k,p)= M^{(\lambda_{M})}(n,l,k,p)C^{(\lambda_{C})}(n,l,k,p,c)$ the event in which an iteration of the algorithm results in  creating the optimum, i.e. we get $\mathrm{y}=\mathrm{x_{m}}$.

By definition, $p_{\lambda_{M},\lambda_{C}}(n,l,k,p):=\mathbf{P}(A_{\lambda_{M},\lambda_{C}}(n,l,k,p))$ is the probability of reaching the optimum as a result of an iteration, it is equal to
\begin{equation*} %\label{1lam}
p_{\lambda_{M},\lambda_{C}}(n,l,k,p,c)=p_{M}^{(\lambda_{M})}(n,l,k,p)p_{C}^{(\lambda_{M})}(n,l,k,p,c).
\end{equation*}
 
The transition from a non-random $l$ to a random~$\ell$ is quite natural:
we average it out using the total probability formula for a binomial distribution with probabilities $P_{p}(n,l).$
Let us introduce the following  notation for the resulting expectations: 
\begin{eqnarray*} %\label{1lam0l}
&p_{M}^{(\lambda_{M})}(n,k,p):=\mathbf{E}\big[ p_{M}^{(\lambda_{M})}(n,\ell,k,p)\big],\\ %\label{1lam02}
&p_{C}^{(\lambda_{C})}(n,k,p,c):=\mathbf{E}\big[ p_{C}^{(\lambda_{C})}(n,\ell,k,p,c)\big], \\ %\label{1lam03}
&p_{\lambda_{M},\lambda_{C}}(n,k,p,c):=\mathbf{E}[p_{\lambda_{M},\lambda_{C}}(n,\ell,k,p,c)].
%\hspace{0.2cm}
\end{eqnarray*}

The following proposition appears to be interesting by itself, as it provides simple expressions for $p_{M}^{(\lambda_{M})}(n,k,p)$
and $p_{C}^{(\lambda_{C})}(n,k,p,c)$, 
for the special case of $\lambda_{M}=1$ and $\lambda_{C}=1$.

\begin{proposition} \label{le000}
Suppose $\ell\in \mathcal{B}(n,p)$, then 
\begin{eqnarray} \label{main000}
&p_{M}^{(1)}(n,k,p)=\mathbf{E}[P_{n,k}(\ell,k)]=p^{k},
\\\label{main002}
&p_{C}^{(1)}(n,k,p,c)=\mathbf{E}[c^{k}(1-c)^{\ell-k}]=c^{k} (1-c)^{-k} (1-pc)^{n}.
\end{eqnarray}
\end{proposition}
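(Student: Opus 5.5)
Both identities are direct computations with the binomial law of $\ell$. The only care needed is with the terms where $l<k$, which vanish in the first case but not in the second.

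\emph{First identity.} By \eqref{main4}, $P_{n,k}(l,k)=\frac{(n-k)!\,l!}{(l-k)!\,n!}$ for $l\ge k$, and it is $0$ for $l<k$. We multiply by $P_p(n,l)=\binom{n}{l}p^lq^{n-l}$. The factorials combine as
$$\binom{n}{l}\frac{(n-k)!\,l!}{(l-k)!\,n!}=\frac{(n-k)!}{(l-k)!\,(n-l)!}=\binom{n-k}{l-k}.$$
Hence
$$\mathbf{E}[P_{n,k}(\ell,k)]=\sum_{l=k}^{n}\binom{n-k}{l-k}p^{l}q^{n-l}.$$
We set $j=l-k$ and factor out $p^k$. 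What remains is $\sum_{j=0}^{n-k}\binom{n-k}{j}p^jq^{n-k-j}=(p+q)^{n-k}=1$, so the expectation equals $p^k$.

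An equivalent probabilistic reading confirms this. A uniform random $\ell$-subset with $\ell\sim\mathrm{Bin}(n,p)$ has the same law as including each position independently with probability $p$. The chance that all $k$ fixed zero positions are included is then $p^k$.

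\emph{Second identity.} Here $c^k(1-c)^{\ell-k}$ is understood formally for every $\ell$, including $\ell<k$. We write it as $c^k(1-c)^{-k}(1-c)^{\ell}$. The expectation then reduces to the probability generating function of the binomial law:
$$\mathbf{E}\big[(1-c)^{\ell}\big]=\sum_{l}\binom{n}{l}\big(p(1-c)\big)^l q^{n-l}=\big(q+p-pc\big)^n=(1-pc)^n.$$
Multiplying by $c^k(1-c)^{-k}$ gives the claimed formula.

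\emph{Main point of care.} The only subtle step is the range of summation in the second identity. We must not restrict to $l\ge k$ there, because the closed form $(1-pc)^n$ arises only from the full binomial sum. For $\ell<k$ the quantity is a formal expression rather than a probability, and I will say explicitly that the identity is meant in this formal sense. If one wanted $p_C^{(1)}$ to vanish for $\ell<k$, an extra correction term $-\sum_{l<k}\cdots$ would be needed, and the clean formula would not hold. Note that $c<1$ is required for the factor $(1-c)^{-k}$ to make sense.
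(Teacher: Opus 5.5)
Your proof is correct. For the second identity you do what the paper does: you evaluate the binomial generating function $(q+pz)^n$ at $z=1-c$ over the full range of $\ell$. That is also how the paper's closed form arises, so your remark about not truncating at $l\ge k$ matches the paper's intent.

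For the first identity your route differs in packaging. The paper writes $\prod_{i=0}^{k-1}(\ell-i)=\frac{n!}{(n-k)!}P_{n,k}(\ell,k)$ and computes this factorial moment as the $k$-th derivative of $(q+pz)^n$ at $z=1$, which is $p^k\,n!/(n-k)!$. You instead collapse $\binom{n}{l}P_{n,k}(l,k)$ to $\binom{n-k}{l-k}$ and sum with the binomial theorem.

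The two computations are equivalent, and each has a small advantage:
\begin{itemize}
\item The paper's version handles the convention $P_{n,k}(l,k)=0$ for $l<k$ automatically, since the falling factorial vanishes there. It also treats both identities uniformly through the generating function.
\item Your version is fully self-contained and needs no appeal to differentiating the generating function. It makes the truncation to $l\ge k$ explicit.
\end{itemize}

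Your probabilistic remark is something the paper does not offer. Choosing $\ell\sim\mathrm{Bin}(n,p)$ and then flipping a uniform $\ell$-subset is exactly standard bitwise mutation with rate $p$. This explains at a glance why $p_M^{(1)}(n,k,p)=p^k$: all $k$ zero bits must flip, each independently with probability $p$.
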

\begin{proof}
The generating function of the binomial distribution is 
$$
\phi_{\ell}(z):=\mathbf{E}[z^{\ell}]=(q+pz)^{n}, \quad|z|\leq1.
$$ 

The factorial moment of order $k$ is 
$$
V^{(k)}: =\mathbf{E}\Big[\prod\nolimits_{i=0}^{k-1}(\ell-i)\Big]=\dfrac{n!}{(n-k)!}\mathbf{E}[P_{n,k}(\ell,k)].
$$

By definition, $V^{(k)}$ equals to the $k$-th derivative of $\phi_{\ell}(z)$ at $z=1$, which is $p^{k}n!/(n-k)!$. This completes the proof of identity~\eqref{main000}.

Since the generating function of the binomial distribution is $\mathbf{E}[x^{l}]=(q+px)^{n}$,
so $\mathbf{E}[(1-c)^{l}]=(q+p(1-c))^{n}$, which proves identity \eqref{main002}.
\end{proof}

\section{Asymptotic Approach}

Our goal  is to obtain upper bounds on the expected number of fitness evaluations and the expected number of iterations till the global optimum is reached, if the algorithm is started in a local optimum, i.e. the initial solution is such that $|x|=n-k$. Let us denote these random variables by $T_{F}$ and $T_{I}$, respectively.

The overall proof scheme is quite natural and similar to the proof of Lemmas~6 and~7 and Theorem~8 from \cite{ADK22}. We present our approach in this section.

Suppose that on some set $B=B(n)\subseteq \mathbb{N}_{n}:=\{ 1,2,\ldots,n\}$ of some positive probability $\mathbf{P}(B)>C_{B}>0$ the following bound is valid
\begin{equation} \label{M0l}
p_{\lambda_{M}}(n,l,k,p)>C_{M}>0,\ l\in B.
\end{equation}

Here the sequences $C_{B}=C_{B}(n)>0$ and $C_{M}=C_{M}(n)>0$ may depend on~$n$. 
If the condition \eqref{M0l} is satisfied and the following estimate holds
$$
\sum\nolimits_{l\in B}p_{C}^{(\lambda_{C})}(n,l,k,p,c)P_{p}(n,l)\geq C_{C,B}>0,
$$
for binomial distribution $P_{p}(n,l)$ and some sequence $C_{C,B}{=}C_{C,B}(n)>0$ then the inequality~\eqref{1lam0l1} below describes the probability of reaching the optimum as a result of one iteration:
\begin{equation} \label{1lam0l1}
p_{\lambda_{M},\lambda_{C}}(n,k,p,c) \geq  C_{M} \sum_{l\in B}p_{C}^{(\lambda_{C})}(n,l,k,p,c)P_{p}(n,l)\geq C_{M}C_{C,B}>0.
\end{equation}
The expected number of iterations  $\mathbf{E}[T_{I}]$ and the expected number of fitness evaluations $\mathbf{E}[T_{F}]$ till reaching the optimum may be expressed as
\begin{equation} \label{TI0}
\mathbf{E}[T_{I}]=p_{\lambda_{M},\lambda_{C}}^{-1}(n,k,p,c)\leq \frac{1}{C_{M}C_{C,B}},\ \mathbf{E}[T_{F}]\leq \frac{\lambda_{M}+\lambda_{C}}{C_{M}C_{C,B}}.
\end{equation}
Here we have used the fact that $T_{I}$ has a geometric distribution with a probability of success $p_{\lambda_{M},\lambda_{C}}(n,k,p,c)$.

If $ p_{\lambda_{C}}(n,l,k,p,c)>C_{C}>0$, $l\in B$, then analogously to~\eqref{1lam0l1}, one can obtain the lower bound
\begin{equation} \label{1lam0l01}
p_{\lambda_{M},\lambda_{C}}(n,k,p,c) \geq C_{C} \sum\nolimits_{l\in B}p_{M}^{(\lambda_{M})}(n,l,k,p)P_{p}(n,l) \geq   C_{C}C_{M,B}>0
\end{equation}
for some sequence $C_{M,B}=C_{M,B}(n)>0.$
In particular, if $p_{C}^{(\lambda_{C})}(n,l,k,p,c)\geq C_{C}$ and $p_{M}^{(\lambda_{M})}(n,l,k,p)\geq C_{M}$ for $l\in B$, then
\begin{equation} \label{TI11}
p_{\lambda_{M},\lambda_{C}}(n,k,p,c)\geq C_{C}C_{M}C_{B}, \ \ \mathbf{E}[T_{I}]\leq C_{M}^{-1}C_{C}^{-1}C_{B}^{-1}.
\end{equation}

Let us discuss the choice of the set $B$. In \cite{ADK22} (in the proof of Lemma~6) it is chosen in the form 
%$B\subseteq\mathbb{N}$, 
$B=[np,2np]$, where $np\geq2k$. We propose to choose it in the form $B\subseteq[\max\{k,np - N\sqrt{npq}\},np + N\sqrt{npq}]$, where $N\in(0,\infty)$. In this paper, we will analyze only the case $k\leq np - N \sqrt{npq}$ and in the final asymptotic steps we set $B=\Delta_{n}(N):=[np - N\sqrt{npq},np + N\sqrt{npq}]$. Dropping the $k\leq np - N \sqrt{npq}$ condition is easy, but it will lead to cumbersome calculations.

By the de Moivre–Laplace Theorem (see e.g. Theorem~5.3.1 from~\cite{Bo99})
for $N=N(n)=o\big((npq)^{1/6}\big)$ (possibly dependent on $n$) we have
\begin{equation} \label{1}
\mathbf{P}(\ell\in\Delta_{n}(N))\sim 2\Phi(N)-1=: c_{\Phi}(N),
\end{equation}
where $\Phi(N)=(2\pi)^{-0.5}\int_{-\infty}^{N}e^{-u^{2}/2}\mathsf{d}u$ is the cumulative distribution function of the standard normal random value. In the case of $N\to\infty$ we get
\begin{equation*}
\mathbf{P}(\ell\in\Delta_{n}(N))\to1, \ \ \mathbf{P}(\ell\notin\Delta_{n}(N))\to0.
\end{equation*}
Note that $2\Phi(2)-1= 0.99730020$ for $N=2$ and
$2\Phi(5)-1=0.99999942$ for $N=5$, and then converges very quickly to 1. But increasing of $N$ worsens the estimates for $p_{C}^{(\lambda_{C})}(n,l,k,p,c)\geq C_{C}$ and $p_{M}^{(\lambda_{M})}(n,l,k,p)\geq C_{M}$ for $l\in B$. 

The  length of the interval $\Delta_{n}(N)$ is Infinitesimal to $[np,2np]$, this is the reason why the upper bounds on the runtime obtained by means of $B=[np,2np]$ are by more than a factor of $2^{k(1+o(1))}$ higher than ours (for more details, see the comments below the proof of Lemma \ref{le01}). 
%In what follows, we present explicit asymptotic form of the constants, rather than comparing them by the order of magnitude.

\section{Asymptotic Bounds}

\subsection{Asymptotics of Mutation and Crossover}

Let us introduce the notations 
$$
\epsilon_{\mu}(x):= \sum_{i\geq \mu+1}\frac{x^{i}}{i(i-1)},\quad \mu\in \mathbb{N},\quad \text{where}\ \ \sum_{i=2}^{1}\frac{x^{i}}{i(i-1)}:=0,
$$ 
and consider the asymptotic behavior of $p_{M}^{(1)}(n,l,k,p)$ as $n\to\infty$. First, we present some technical results.

\begin{lemma} \label{le1}
Let $x\in[0,1)$, $m,\mu\in\mathbb{N}$ and $\epsilon_{\mu}(x):=\sum_{i\geq \mu+1}x^{i}(i(i-1))^{-1}$, then the following representations are valid 
\begin{eqnarray} \label{main1}
&&(1-x)^{m(1-x)}=\exp\Big\{m\Big(-x+\sum_{i\geq 2}x^{i}(i(i-1))^{-1}\Big)\Big\} \\ \label{main2}
&=:&\exp\bigg\{m\bigg(-x+\sum_{i=2}^{\mu}x^{i}(i(i-1))^{-1}+\epsilon_{\mu}(x)\bigg)\bigg\},\\ \label{main02}
&&\hspace{-2.2cm}\mbox{where } \ 0\leq\epsilon_{\mu}(x)= \dfrac{\theta x^{\mu+1}}{(1-x)\mu(\mu+1)},\ \theta\in(0,1).
\end{eqnarray}
\end{lemma}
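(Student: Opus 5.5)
The plan is to take logarithms and expand. Since both sides of \eqref{main1} are positive for $x\in[0,1)$, it suffices to show
$$
(1-x)\ln(1-x)=-x+\sum_{i\geq 2}\frac{x^{i}}{i(i-1)},
$$
and then multiply by $m$ and exponentiate. I would start from the Taylor series $\ln(1-x)=-\sum_{j\geq1}x^{j}/j$, which converges absolutely for $x\in[0,1)$. This gives
$$
(1-x)\ln(1-x)=-\sum_{j\geq1}\frac{x^{j}}{j}+\sum_{j\geq1}\frac{x^{j+1}}{j}.
$$
Both series converge absolutely, so they may be combined term by term. Collecting powers of $x$:
\begin{itemize}
\item the coefficient of $x$ is $-1$;
\item for $i\geq2$ the coefficient of $x^{i}$ is $-\frac1i+\frac1{i-1}=\frac{1}{i(i-1)}$.
\end{itemize}
This proves \eqref{main1}. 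The representation \eqref{main2} is then just the splitting of the series at index $\mu$, with the tail denoted $\epsilon_{\mu}(x)$. The convention that the empty sum is zero covers the case $\mu=1$.

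It remains to establish \eqref{main02}. Nonnegativity of $\epsilon_{\mu}(x)$ is immediate, since all its terms are nonnegative. For the upper bound, I would use that the coefficients $1/(i(i-1))$ decrease in $i$. Hence for $i\geq\mu+1$ each coefficient is at most $1/(\mu(\mu+1))$, and summing the geometric tail gives
$$
\epsilon_{\mu}(x)\leq \frac{1}{\mu(\mu+1)}\sum_{i\geq\mu+1}x^{i}=\frac{x^{\mu+1}}{(1-x)\mu(\mu+1)}.
$$
We can therefore write $\epsilon_{\mu}(x)=\theta\,x^{\mu+1}/((1-x)\mu(\mu+1))$ with some $\theta\in[0,1]$.

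The only point requiring care, and the main (mild) obstacle, is that $\theta$ lies in the open interval $(0,1)$. For $x\in(0,1)$ this holds for two reasons. First, $\epsilon_{\mu}(x)>0$, because its leading term $x^{\mu+1}/(\mu(\mu+1))$ is positive, so $\theta>0$. Second, the inequality above is strict: for $i\geq\mu+2$ the coefficient $1/(i(i-1))$ is strictly smaller than $1/(\mu(\mu+1))$, and these terms carry positive weight $x^{i}$, so $\theta<1$. For $x=0$ both sides of \eqref{main02} vanish, so any $\theta\in(0,1)$ works and the statement holds trivially.
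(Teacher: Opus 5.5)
Your proof is correct and follows essentially the same route as the paper. You expand $\ln(1-x)$, combine $(1-x)\sum_{j\ge1}x^j/j$ term by term to get the coefficients $1/(i(i-1))$, and bound the tail $\epsilon_\mu(x)$ by a geometric series using $1/(i(i-1))\le 1/(\mu(\mu+1))$ for $i\ge\mu+1$; your explicit check that $\theta$ lies in the open interval $(0,1)$ for $x\in(0,1)$, together with the trivial case $x=0$, fills in a detail the paper leaves implicit.
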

\begin{proof}
The Taylor series expansion of the logarithm for $|x|<1$ is
\begin{equation} \label{ln1}
\ln(1-x)=-\sum_{i\geq 1}x^{i}/i.
\end{equation}
Noting that the identity $(1-x)^{y}=e^{y\ln(1-x)}$, in view of the representation
\begin{equation*}
-(1-x)\ln(1-x)=(1-x)\sum_{i\geq 1}x^{i}/i=x-\sum_{i\geq 2}x^{i}(i(i-1))^{-1},
\end{equation*}
which is valid for absolutely convergent series if $x\in[0,1)$, after substitution $y=m(1-x)$, entails the statement~\eqref{main1}.
In the expression~\eqref{main2}, $\epsilon_{\mu}(x)= \sum_{i> \mu}x^{i}(i(i-1))^{-1}$, therefore after using the formula for the sum of a geometric progression $\sum_{i>\mu}x^{i}=x^{\mu+1}/(1-x)$ we obtain the bound~\eqref{main02}.
\end{proof}

Let us describe some relevant properties of the functions $\epsilon_{1}(x)$
\begin{lemma} \label{le03}
The functions $\epsilon_{1}(x)$ and $\epsilon_{1}(x)/x$, $x\in[0,1)$, are monotonically increasing, $\epsilon_{1}(x),\epsilon_{1}(x)/x\in[0,1)$ and the following relations hold:
\begin{equation} \label{main5}
\epsilon_{1}(x)=(1-x)\ln(1-x)+x=\sum_{i=2}^{\infty}\dfrac{x^{i}}{i(i-1)},
\end{equation}
and $\epsilon_{1}'(x){=}-\ln(1-x)>0$.

If $x,y\in[0,1)$ and $x>y$, then the following inequalities hold:
\begin{equation} \label{main5D}
0.5(x-y)\leq D(x,y)\leq (1/6+(3(1-x))^{-1})(x-y)\leq 0.5\dfrac{x-y}{1-x}, 
\end{equation}
where $D(x,y):=\epsilon_{1}(x)/x-\epsilon_{1}(y)/y$.
\end{lemma}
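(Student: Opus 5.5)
The plan is to work throughout with the power series and to reduce every claim to coefficient comparisons or a mean value argument.

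\emph{Closed form and derivative.} Taking $m=1$, $\mu=1$ in Lemma~\ref{le1} (or simply multiplying \eqref{ln1} by $-(1-x)$) gives $(1-x)\ln(1-x)=-x+\sum_{i\ge2}x^i(i(i-1))^{-1}$. Hence $\epsilon_1(x)=(1-x)\ln(1-x)+x$, which is \eqref{main5}. Differentiating this closed form gives $\epsilon_1'(x)=-\ln(1-x)-1+1=-\ln(1-x)$, and this is positive for $x\in(0,1)$.

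\emph{Monotonicity and range.} The derivative computation shows that $\epsilon_1$ is increasing. Next I would write
\[
g(x):=\epsilon_1(x)/x=\sum_{j\ge1}\frac{x^{j}}{j(j+1)}, \qquad x\in(0,1),
\]
with $g(0):=0$. This is a power series with nonnegative coefficients, so it is increasing on $[0,1)$. Both functions are nonnegative. Moreover $g(x)<\lim_{x\to1}g(x)=\sum_{j\ge1}\frac1{j(j+1)}=1$ by telescoping. Since $\epsilon_1(x)=x\,g(x)\le g(x)$, we also get $\epsilon_1(x)<1$.

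\emph{The bounds \eqref{main5D}.} Since $D(x,y)=g(x)-g(y)$, the mean value theorem gives $D(x,y)=(x-y)g'(\xi)$ for some $\xi\in(y,x)$. Termwise differentiation gives
\[
g'(t)=\sum_{j\ge1}\frac{t^{j-1}}{j+1}=\frac12+\sum_{j\ge2}\frac{t^{j-1}}{j+1}.
\]
Every term of the remaining sum is nonnegative, so $g'(t)\ge\frac12$, and this yields the lower bound. For the upper bound I would use two facts. First, $g'$ is increasing, so $g'(\xi)\le g'(x)$. Second, $1/(j+1)\le1/3$ for $j\ge2$. Together these give
\[
g'(x)\le\frac12+\frac13\sum_{j\ge2}x^{j-1}=\frac12+\frac{x}{3(1-x)}=\frac16+\frac1{3(1-x)}.
\]
The last step of \eqref{main5D} is the elementary inequality $\frac16+\frac1{3(1-x)}\le\frac1{2(1-x)}$. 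It is equivalent to $(1-x)/6\le 1/6$, which holds for $x\in[0,1)$.

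No step is a real obstacle. The only point needing care is that the mean value theorem requires $g$ to be differentiable on $[y,x]$, including when $y=0$. This is fine because $g$ is a convergent power series on $[0,1)$.
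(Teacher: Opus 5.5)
Your proof is correct and follows essentially the same route as the paper. The paper factors $x^{i-1}-y^{i-1}=(x-y)\sum_{j=0}^{i-2}x^{j}y^{i-2-j}$ and replaces every variable by $y$ or by $x$, while you apply the mean value theorem to $g(x)=\epsilon_1(x)/x$. Both arguments reduce to bounding the same series $\sum_{i\ge 2}t^{i-2}/i$ from below by $\frac12$ and from above by its value at $t=x$, which is at most $\frac12+\frac{x}{3(1-x)}$. Your argument for monotonicity and for the range $[0,1)$ (nonnegative coefficients plus the telescoping sum $\sum_{j\ge1}1/(j(j+1))=1$) also supplies details the paper only asserts.
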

\begin{proof}
The relations \eqref{main5} are written out by definition, where the series converges absolutely on the interval $x\in[0,1)$, and the function $(1-x)\ln(1-x)$ at zero is defined by continuity, i.e., equal to zero. Therefore $\epsilon_{1}(0)=0$ and $\epsilon_{1}(1)=1$.

By definition of $D(x,y)$ and by the formula for the sum of the geometric progression, we obtain 
\begin{equation*}
D(x,y)=\sum_{i=2}^{\infty}\dfrac{x^{i-1}-y^{i-1}}{i(i-1)}=(x-y)\sum_{i=2}^{\infty}\dfrac{\sum\nolimits_{j=0}^{i-2}x^{j}y^{i-2-j}}{i(i-1)}.
\end{equation*}

Replacing all variables in the last representation with $y$ or $x$, we obtain the inequalities 
\begin{eqnarray*}
&0.5(x-y)\leq(x-y){\displaystyle\sum\limits_{i=2}^{\infty}}\dfrac{y^{i-2}}{i}\leq D(x,y)\leq (x-y){\displaystyle\sum\limits_{i=2}^{\infty}}\dfrac{x^{i-2}}{i}\\
&\leq \left (0.5+\dfrac{x}{3(1-x)}\right )(x-y)\leq 0.5\dfrac{x-y}{1-x},
\end{eqnarray*}
equivalent to \eqref{main5D}, which is the last statement of the lemma.
\end{proof}

Recall the Stirling formula (see e.g. \cite{Bo99} p.~117, Section~2.2).
\begin{equation*} %\label{Stir}
s!=\sqrt{2\pi s}s^{s}e^{-s}R(s), \ \ s\in\mathbb{N},
\end{equation*} 
for the function $R(s)$ that satisfies the inequalities $1<e^{\frac{1}{12s+1}}<R(s)<e^{\frac{1}{12s}}<1.08690405$. Note that $R(s)=1+o(1)$ if $s\to \infty$.

\begin{lemma} \label{le3}
Suppose $k\leq (1-\varepsilon)n$ and $l\geq(1+\varepsilon)k$ for some $\varepsilon\in(0,1)$, then the following representation is valid: 
\begin{equation} \label{main03}
p_{M}^{(1)}(n,l,k,p)=\sqrt{\dfrac{1-k/n}{1-k/l}}e^{-kD(k/l,k/n)}(l/n)^{k} R_{n,l,k},
\end{equation}
where $R_{n,l,k}=R(n-k)R(l)R^{-1}(l-k)R^{-1}(n)$. 
\end{lemma}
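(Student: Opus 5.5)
The plan is to start from the explicit hypergeometric form \eqref{main4}, $p_{M}^{(1)}(n,l,k,p)=\frac{(n-k)!\,l!}{(l-k)!\,n!}$, and apply the Stirling formula $s!=\sqrt{2\pi s}\,s^{s}e^{-s}R(s)$ to each of the four factorials. The hypotheses $k\le(1-\varepsilon)n$ and $l\ge(1+\varepsilon)k$ guarantee that $n-k\ge 1$ and $l-k\ge 1$ (for $k\ge 1$). Hence all four arguments are positive integers, Stirling applies to each, and $x:=k/n$, $y:=k/l$ both lie in $[0,1)$. The Stirling remainders combine exactly into $R_{n,l,k}=R(n-k)R(l)R^{-1}(l-k)R^{-1}(n)$, so it remains to handle the three other kinds of factors separately.

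First, the $e^{-s}$ factors cancel completely, since $-(n-k)-l+(l-k)+n=0$. Second, the square-root factors give
$$\sqrt{\frac{(n-k)\,l}{(l-k)\,n}}=\sqrt{\frac{1-k/n}{1-k/l}},$$
which is the prefactor in \eqref{main03}. Third, the power terms are
$$\frac{(n-k)^{n-k}}{n^{n}}\cdot\frac{l^{l}}{(l-k)^{l-k}}.$$

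For the power terms, write $(n-k)^{n-k}/n^{n}=n^{-k}(1-x)^{n(1-x)}$ and $l^{l}/(l-k)^{l-k}=l^{k}(1-y)^{-l(1-y)}$. Now apply Lemma~\ref{le1} with $\mu=1$, using $m=n$ for the first factor and $m=l$ for the second (both integers, and $x,y\in[0,1)$ as required). This gives
$$(1-x)^{n(1-x)}=\exp\{n(-x+\epsilon_{1}(x))\}=\exp\{-k+k\,\epsilon_{1}(x)/x\},$$
$$(1-y)^{-l(1-y)}=\exp\{k-k\,\epsilon_{1}(y)/y\}.$$
Multiplying, the $\pm k$ terms cancel and we obtain
$$(l/n)^{k}\exp\{-k(\epsilon_{1}(k/l)/(k/l)-\epsilon_{1}(k/n)/(k/n))\}=(l/n)^{k}e^{-kD(k/l,k/n)},$$
with $D$ as defined in Lemma~\ref{le03}. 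Since $l\le n$, we have $k/l\ge k/n$, so this $D$ is exactly the quantity that Lemma~\ref{le03} bounds. Collecting the three factors and $R_{n,l,k}$ yields \eqref{main03}.

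No step here is a real obstacle; the computation is an exact identity. The only care needed is bookkeeping. One must check that the exponents $n(1-x)=n-k$ and $l(1-y)=l-k$ match the form in Lemma~\ref{le1}, and one must use the $\varepsilon$-conditions to exclude the degenerate factorials $0!$, where Stirling's formula with $R(s)$ is not stated.
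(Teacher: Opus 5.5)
Your proposal is correct and follows the paper's proof essentially step by step. You apply Stirling's formula to the four factorials in \eqref{main4}, regroup into $\sqrt{(1-k/n)/(1-k/l)}\,(1-k/n)^{n-k}(1-k/l)^{-(l-k)}(l/n)^{k}R_{n,l,k}$, and then use Lemma~\ref{le1} with $\mu=1$ to turn the middle factor into $\exp\{n\epsilon_{1}(k/n)-l\epsilon_{1}(k/l)\}=e^{-kD(k/l,k/n)}$; your explicit check that the $\varepsilon$-conditions keep $n-k$ and $l-k$ positive, so that Stirling's formula applies, is a detail the paper leaves implicit.
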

\begin{proof}

Using the Stirling formula, and taking into account the expression~\eqref{main4}, we obtain
\begin{eqnarray} \nonumber
&&p_{M}^{(1)}(n,l,k,p)=\dfrac{(n-k)!l!}{(l-k)!n!}=\sqrt{\dfrac{(n- k)l}{(l-k)n}}\dfrac{(n-k)^{n-k}l^{l}}{(l-k)^{l-k}n^{n}}R_{n,l,k}\\ \label{bpM}
&=& \sqrt{\dfrac{1-k/n}{1-k/l}}\dfrac{(1-k/n)^{n-k}}{(1-k/l)^{l-k}}(l/n)^{k} R_{n,l,k}. %\qquad \qquad \qquad \quad 
\end{eqnarray}
By Lemma~\ref{le1} 
\begin{equation} \label{bpM1}
\dfrac{(1-k/n)^{n-k}}{(1-k/l)^{l-k}}=\dfrac{\exp\{- n(k/n{-}\epsilon_{1}(k/n))\}}{\exp\{-l(k/l{-} \epsilon_{1}(k/l))\}} 
= \exp\{n\epsilon_{1}(k/n){-}l\epsilon_{1}(k/l)\}.
\end{equation}
Assertion~\eqref{main03} of this lemma follows from   Lemma \ref{le03} and equations \eqref{bpM} and \eqref{bpM1}.
\end{proof}

We will analyze the properties of the second bound from formulas~\eqref{TI0}. 

For the sake of brevity, let us temporarily denote $\lambda_{M}$ and $\lambda_{C}$ by $L_{1}$ and $L_{2}$, $p_{M}^{(1)}(n,l,k,p)$ and $p_{C}^{(1)}(n,l,k,p,c)$ by $p_{1}=p_{1}(n)$ and $p_{2}=p_{2}(n)$, and $p_{M}^{(\lambda_{M})}(n,l,k,p)$ and $p_{C}^{(\lambda_{C})}(n,l,k,p,c)$ by $p_{1}^{(L_{1})}$ and $p_{2}^{(L_{2})}$.
Then the relations \eqref{Mlam} and \eqref{Clam} imply 
\begin{equation} \label{e1}
1-e^{-L_{i}p_{i}}\leq p_{i}^{(L_{i})}=1-(1-p_{i})^{L_{i}}=1-e^{L_{i}\ln(1-p_{i})},\ \ i=1,2,
\end{equation}
and
\begin{equation} \label{0e1}
p_{i}^{(L_{i})}=1-e^{-L_{i}p_{i}(1+o(1))},\ \ i=1,2,
\end{equation}
in the case $0<p_{i}=p_{i}(n)\to 0$, for $n\to\infty$. Note that by \eqref{0e1}
\begin{equation} \label{111} 
p_{i}^{(L_{i})}=1-e^{L_{i}\ln(1-p_{i})}=L_{i}p_{i}(1+o(1))
\end{equation}
as $L_{i}p_{i}\to0$.

Let us denote the main part of $\mathbf{E}[T_{F}]$ in the case $p_{i}=o(1)$, $i=1,2$,
\begin{equation*} %\label{e2}
\varphi_{L_{1},L_{2}}(n):= \dfrac{L_{1}+L_{2}}{(1-e^{-L_{1}p_{1}})(1-e^{-L_{2}p_{2}})}.
\end{equation*}

\begin{lemma} \label{le10}
Let conditions \eqref{0e1} be satisfied, where the sequences $p_{i}=p_{i}(n)>0$, $i=1,2$, are fixed, 
and the representations $L_{i}=(\widetilde{c}_{i} +o(1))p_{i}^{-1}$, $\widetilde{c}_{i}\in\mathbb{R}^{+}$, are valid, then the function 
$$
\widetilde{\varphi}_{L_{1},L_{2}}(n)=\dfrac{L_{1}+L_{2}}{\big(1{-}e^{-\widetilde{c}_{1}}\big)\big(1{-}e^{-\widetilde{c}_{2}}\big)}=\varphi_{L_{1},L_{2}}(n)(1+o(1))
$$ 
is increasing of $\widetilde c_{i}$.

If $L_{1}(n)=L_{2}(n)=L(n)$,
$L(n)p_{2}(n)\to 0$, and $\widetilde{c}_{1}>0$, then
\begin{equation} \label{mam1}
\varphi_{L,L}(n)=\dfrac{2L(n)}{L(n)p_{2}(n)(1-e^{-\widetilde{c}_{1}})}(1+o(1)). 
\end{equation}
\end{lemma}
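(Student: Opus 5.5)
The plan is to substitute the assumed representations of $L_{i}$ directly into the definition of $\varphi_{L_{1},L_{2}}(n)$ and use continuity of the exponential. Since $L_{i}=(\widetilde c_{i}+o(1))p_{i}^{-1}$, we have $L_{i}p_{i}=\widetilde c_{i}+o(1)$. Because $\widetilde c_{i}>0$ is fixed and $x\mapsto 1-e^{-x}$ is continuous and positive at $\widetilde c_{i}$, this gives
\[
1-e^{-L_{i}p_{i}}=\big(1-e^{-\widetilde c_{i}}\big)(1+o(1)),\qquad i=1,2 .
\]
Dividing $L_{1}+L_{2}$ by the product of these two factors yields $\varphi_{L_{1},L_{2}}(n)=\widetilde\varphi_{L_{1},L_{2}}(n)(1+o(1))$. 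The reciprocal of $1+o(1)$ is again $1+o(1)$ because the limit $1-e^{-\widetilde c_{i}}$ is nonzero.

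For the monotonicity claim, I will make explicit that it concerns how the leading term depends on $\widetilde c_{i}$, with $p_{i}$ held fixed. Writing $L_{i}\approx \widetilde c_{i}/p_{i}$, the leading term is
\[
\frac{\widetilde c_{1}/p_{1}+\widetilde c_{2}/p_{2}}{\big(1-e^{-\widetilde c_{1}}\big)\big(1-e^{-\widetilde c_{2}}\big)} .
\]
As a function of $\widetilde c_{1}$ it has the form $(a\widetilde c_{1}+b)/(1-e^{-\widetilde c_{1}})$ with $a=1/p_{1}>0$ and $b\ge0$. The key elementary fact is that $x/(1-e^{-x})$ is increasing on $(0,\infty)$. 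Its derivative has the sign of $1-e^{-x}-xe^{-x}=e^{-x}(e^{x}-1-x)>0$. The term $b/(1-e^{-x})$, however, decreases in $x$. So strict monotonicity of the whole expression only holds when the $a\widetilde c_{1}$ part dominates, for instance when $b$ is small relative to $a\widetilde c_{1}$. This is the step I expect to be delicate. I would first try to prove that the derivative $a(1-e^{-x}-xe^{-x})-be^{-x}$ is positive under the paper's intended regime. If that fails, I would state monotonicity for the quantity $L_{i}/(1-e^{-\widetilde c_{i}})=p_{i}^{-1}\widetilde c_{i}/(1-e^{-\widetilde c_{i}})$, which is what the paper uses to conclude that small $\widetilde c_{i}$ is preferable.

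For \eqref{mam1}, set $L_{1}=L_{2}=L$, so the numerator is $2L$. The factor with $i=1$ is handled exactly as above and gives $1-e^{-\widetilde c_{1}}+o(1)$. For $i=2$ the hypothesis $Lp_{2}\to0$ lets me use the expansion $1-e^{-x}=x(1+o(1))$ as $x\to0$, consistent with \eqref{111}, which gives $1-e^{-Lp_{2}}=Lp_{2}(1+o(1))$. Substituting both factors gives
\[
\varphi_{L,L}(n)=\frac{2L}{Lp_{2}\big(1-e^{-\widetilde c_{1}}\big)}(1+o(1)).
\]
Note that this part needs only $Lp_{1}\to\widetilde c_{1}>0$, not a representation of $L$ in terms of $p_{2}$.
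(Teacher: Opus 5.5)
Your proposal follows the paper's route, and each step is correct. The equivalence $\varphi_{L_1,L_2}(n)=\widetilde\varphi_{L_1,L_2}(n)(1+o(1))$ comes from $L_ip_i\to\widetilde c_i>0$ and continuity of $1-e^{-x}$. The monotonicity rests on $x/(1-e^{-x})$ being increasing. For \eqref{mam1} you combine the expansion $1-e^{-Lp_2}=Lp_2(1+o(1))$ with $Lp_1\to\widetilde c_1$.

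Your concern about the cross term is justified, and you should settle your hedge in favour of the fallback. Since $b=\widetilde c_2/p_2>0$, the derivative you computed is negative exactly when $e^{\widetilde c_1}-1-\widetilde c_1<\widetilde c_2p_1/p_2$. Hence $\widetilde\varphi_{L_1,L_2}$ tends to $+\infty$ as $\widetilde c_1\to0^+$. It is therefore not increasing on $(0,\infty)$ for any choice of $p_1,p_2,\widetilde c_2>0$, and no restriction to the paper's intended regime can rescue your first branch.

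The paper's one-line proof silently applies the monotonicity of $x/(1-e^{-x})$ only to the separate terms $L_i/(1-e^{-\widetilde c_i})=p_i^{-1}\widetilde c_i/(1-e^{-\widetilde c_i})$. That is exactly your fallback statement, and it is the true content of the first part of the lemma.
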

\begin{proof}
We give an asymptotically equivalent form for the function $\varphi_{L_{1},L_{2}}(n)$ for $\widetilde{c}_{i}>0$
\begin{equation*} %\label{mam12}
\dfrac{\widetilde{c}_{1}/p_{1}^{(L_{1})}+\widetilde{c}_{2}/p_{2}^{(L_{2})}}{(1-e^{-\widetilde{c}_{1}}) (1-e^{-\widetilde{c}_{2}})}.
\end{equation*}

The function $x/(1-e^{-x})$ is monotonically increasing, so according to estimates \eqref{TI0}, the average number of fitness evaluations decreases as $\widetilde{c}_{i}>0$ tends to zero.
Relation \eqref{mam1} clearly follows from the conditions of the lemma.
\end{proof}

Let us describe the exact asymptotic behavior of $p_{M}^{(\lambda_{M})}(n,l,k,p)$ for $n\to\infty$, $\lambda_{M}\to\infty$, and a number of additional constraints. The imposed conditions can be relaxed, but this will lead to cumbersome calculations and formulations of results.

Let us introduce the notation $\mathcal{P}_{k,p}(c_{1})$ for the set of sequences $\{k\}=\{k(n)\}_{n\in\mathbb{N}}$ and $\{p\}=\{p(n)\in(0,p_{0})\subseteq(0,1)\}_{n\in\mathbb{N}}$ that satisfy the conditions
\begin{equation*} %\label{restr}
\varlimsup_{n\to\infty}k(n)p^{-1}(n)/n\leq c_{1},
\end{equation*}
where $c_{1}\in[0,1)$ and $p_{0}\in[0,1)$  are fixed numbers.

Note that under the condition $\mathcal{P}_{k,p}(c_{1})$ the estimation 
$$
\min_{l\in\Delta_{n}(N)}l-k\geq (1-c_{1})np(1+o(1)), \ \ n\to\infty, 
$$
holds.
Moreover, the following inequalities are valid
\begin{equation} \label{e01}
1\leq\sqrt{\frac{1-k/n}{1-k/(np)}}\leq(1-c_{1})^{-1}(1+o(1))=O(1).
\end{equation}

In order to formulate several useful lower bounds on the probability 
of ``successful'' mutations in Lemma~\ref{le01}  below, let us first
introduce
\begin{definition}\label{def:J_MnkpN}
Let
\begin{eqnarray} \label{cole01}
 J_{M,n,k,p,N}:=\frac{\sqrt{\frac{1-k/n}{1-k/(np)}} 
\cdot \ p^{k}}{\exp\left \{\dfrac{kN\sqrt{q}}{\sqrt{np}}+\dfrac{kN^{2}q}{2np}+\dfrac{k^{2}(q+N\sqrt{pq}/\sqrt{n})}{2np\big(1-N\sqrt{q}/\sqrt{np}-k/(np)\big)}\right \}}.
\end{eqnarray}
\end{definition}

Note that in the case of $k=o(np)$, according to this definition, $J_{M,n, k,p,N} \sim p^{k}e^{-Ck^{2}/(np)}\gg (p/2)^{k}$, where $C$ is a certain constant.

\begin{lemma} \label{le01} 
Suppose that $k\in\mathcal{P}_{k,p}(c_{1})$,  $p\in(0,p_{0})$ with any fixed $c_{1}\in[0,1)$, $N\in\mathbb{N}$, $p_{0}\in(0,1)$.  Let $k\to\infty$, $np\to\infty$, $l\in\Delta_{n}(N)\subseteq [k+1,n]$
the following inequality is satisfied
\begin{equation} \label{010}
\lambda_{M}p^{k}\exp\left \{-\dfrac{k^{2}q}{2np}(1+o(1))\right \}=o(1),
\end{equation}
then for $n\to\infty$ the following asymptotic relations hold:
\begin{eqnarray} \label{1main3}
p_{M}^{(\lambda_{M})}(n,l,k,p)\geq \lambda_{M} J_{M,n,k,p,N}(1+o(1)),\\ \label{0main3}
p_{M}^{(\lambda_{M})}(n,k,p)\geq \lambda_{M}c_{\Phi}(N)J_{M,n,k,p,N} (1+o(1)).
\end{eqnarray}

If, instead of constraint \eqref{010}, the conditions
\begin{equation} \label{010e}
\lambda_{M}J_{M,n,k,p,N}{\geq} c_{M}^{(0)}(1{+}o(1)), \  l\in\Delta_{n}(N),
\end{equation}
or
\begin{equation} \label{010e00}
\lambda_{M}J_{M,n,k,p,N}\to\infty,\  l\in\Delta_{n}(N),
\end{equation}
are satisfied, where $ c_{M}^{(0)}>0$ is an absolute constant, then for $n\to\infty$ the following asymptotic relations hold 
\begin{eqnarray} \label{1main30}
&p_{M}^{(\lambda_{M})}(n,l,k,p)\geq  (1-e^{-c_{M}^{(0)}})(1+o(1)),\ l\in\Delta_{n}(N) ,\\ \label{0main30}
&p_{M}^{(\lambda_{M})}(n,k,p)\geq c_{\Phi}(N)(1-e^{-c_{M}^{(0)}})(1+o(1)),
\end{eqnarray}
under the condition \eqref{010e} or
\begin{equation} \label{1main3000}
1-p_{M}^{(\lambda_{M})}(n,l,k,p)=o(1),\quad
p_{M}^{(\lambda_{M})}(n,k,p)\geq c_{\Phi}(N)(1+o(1))
\end{equation}
under the condition \eqref{010e00}. 
\end{lemma}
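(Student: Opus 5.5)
The plan is to bound $p_{M}^{(1)}(n,l,k,p)$ from below, uniformly over $l\in\Delta_{n}(N)$, by $J_{M,n,k,p,N}(1+o(1))$. We then pass to $\lambda_{M}$ mutations through \eqref{e1}--\eqref{111}, and finally average over $\ell$ using the de Moivre--Laplace relation \eqref{1}.

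\emph{Single mutation.} We start from the representation \eqref{main03} of Lemma~\ref{le3}. Its hypotheses hold for $n$ large: $k/n\le c_1p<p_0<1$ and $l-k\ge(1-c_1)np(1+o(1))$, so $l\ge(1+\varepsilon)k$ for a suitable $\varepsilon$. Since $l\to\infty$, $l-k\to\infty$ and $n-k\to\infty$, we have $R_{n,l,k}=1+o(1)$.

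For the square-root factor, $\sqrt{(1-k/n)/(1-k/l)}$ is increasing in $l$. Its value at the worst endpoint is asymptotically comparable to the factor in \eqref{cole01}, and I would keep the crude bound \eqref{e01} in reserve.

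The main factor is $(l/n)^k e^{-kD(k/l,k/n)}$. Write $l\ge np(1-N\sqrt{q}/\sqrt{np})$, so that
\[
(l/n)^k\ge p^k\exp\{k\ln(1-N\sqrt q/\sqrt{np})\}\ge p^k\exp\Big\{-\frac{kN\sqrt q}{\sqrt{np}}-\frac{kN^2q}{2np}(1+o(1))\Big\}.
\]
These are the first two terms in the exponent of \eqref{cole01}. For $D$ we use the upper bound in \eqref{main5D}, namely $D(x,y)\le 0.5(x-y)/(1-x)$ with $x=k/l$ and $y=k/n$. Taking $l\ge np-N\sqrt{npq}$ gives
\[
kD\le \frac{k}{2}\cdot\frac{k/l-k/n}{1-k/l}=\frac{k^2(n-l)}{2ln(1-k/l)}.
\]
Bounding $n-l\le n(q+N\sqrt{pq}/\sqrt n)$ and $l(1-k/l)=l-k\ge np(1-N\sqrt q/\sqrt{np}-k/(np))$ yields precisely the third term of \eqref{cole01}. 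Together these give $p_M^{(1)}(n,l,k,p)\ge J_{M,n,k,p,N}(1+o(1))$ uniformly on $\Delta_n(N)$.

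\emph{Many mutations.} Under \eqref{010}, condition \eqref{111} needs $\lambda_M p_1\to 0$. So we also need the matching upper bound $p_1\le p^k e^{-k^2q/(2np)(1+o(1))}$. This follows from the lower bound $D\ge 0.5(x-y)$ in \eqref{main5D} together with $(l/n)^k\le p^k e^{O(kN/\sqrt{np})}$. I expect this step to be the main obstacle: the $o(1)$ bookkeeping in the exponent must be arranged so that it is absorbed into \eqref{010}. If that fails, I would settle for the inequality $1-e^{-x}\ge x(1-x/2)$, which only requires $\lambda_M p_1\to0$.

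With $\lambda_M p_1\to0$, \eqref{111} gives $p_M^{(\lambda_M)}(n,l,k,p)=\lambda_M p_1(1+o(1))\ge\lambda_M J_{M,n,k,p,N}(1+o(1))$, which is \eqref{1main3}. Under \eqref{010e}, the left inequality in \eqref{e1} and monotonicity of $1-e^{-x}$ give $p_M^{(\lambda_M)}\ge 1-e^{-\lambda_M J(1+o(1))}\ge(1-e^{-c_M^{(0)}})(1+o(1))$, which is \eqref{1main30}. Under \eqref{010e00} the same inequality gives $1-p_M^{(\lambda_M)}\le e^{-\lambda_M J(1+o(1))}\to0$.

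\emph{Averaging.} The averaged bounds \eqref{0main3}, \eqref{0main30} and \eqref{1main3000} follow by restricting the expectation to $\ell\in\Delta_n(N)$. The integrand is nonnegative, so
\[
p_M^{(\lambda_M)}(n,k,p)\ge \min_{l\in\Delta_n(N)}p_M^{(\lambda_M)}(n,l,k,p)\,\mathbf P(\ell\in\Delta_n(N)).
\]
Applying \eqref{1} with fixed $N$ (certainly $N=o((npq)^{1/6})$) then finishes all three averaged statements.
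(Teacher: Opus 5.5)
Your plan follows the paper's proof step for step: the Stirling representation of Lemma~\ref{le3}, endpoint bounds on $(l/n)^{k}$ and on $D(k/l,k/n)$ via \eqref{main5D}, the matching upper bound giving $\lambda_{M}\sup_{l\in\Delta_{n}(N)}p_{M}^{(1)}(n,l,k,p)=o(1)$ under \eqref{010} as in \eqref{as5}, then \eqref{e1}, \eqref{111}, and restricting the expectation to $\Delta_{n}(N)$ together with \eqref{1}. Your use of $D(x,y)\le 0.5(x-y)/(1-x)$ reaches the third exponent of $J_{M,n,k,p,N}$ more directly than \eqref{as3}, and there are two minor points: $\sqrt{(1-k/n)/(1-k/l)}$ is decreasing, not increasing, in $l$ (harmless, since it equals $\sqrt{(1-k/n)/(1-k/(np))}\,(1+o(1))$ uniformly on $\Delta_{n}(N)$), and your fallback is circular but unnecessary, since applying monotonicity of $1-e^{-x}$ before linearizing shows that \eqref{1main3} needs only $\lambda_{M}J_{M,n,k,p,N}\to0$.
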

\begin{proof} 
The conditions of the lemma imply that $p$ is separated from 1. We give an explicit form for the probability of the event $M^{(\lambda_{M})}(n,l,k,p)$ using the representations \eqref{Mlam} and \eqref{main03} from Lemma \ref{le3}. Under the assumptions of the lemma, the conditions $p_{M}^{(1)}(n,l,k,p)\to0$ are satisfied for $n\to\infty$.
Therefore, for $n\to\infty$, for all $l$ satisfying the conditions of the lemma the following representation holds: 
\begin{eqnarray} \label{main3}
&&p_{M}^{(\lambda_{M})}(n,l,k,p)=1{-}\exp\{\lambda_{M}\ln(1-p_{M}^{(1)}(n,l,k,p))\} \\ \nonumber
&&=1-\exp\Bigg\{- \lambda_{M}
\sqrt{\dfrac{1-k/n}{1-k/l}}\dfrac{R(l)}{R(l-k)}\bigg(\frac{l}{n}\bigg)^{k}e^{-kD\big(\frac{k}{l},\frac{k}{n}\big)}(1+o(1))\Bigg\}.
\end{eqnarray}

In the lemma conditions we assume that $k\in\mathcal{P}_{k,p}(c_{1})$, $np\to\infty$ and $l\in\Delta_{n}(N)\subseteq [k+1,n]$. Then the conditions $l,l-k\to\infty$ are satisfied and the fraction $R(l)/R(l-k)$ in \eqref{main3} can be omitted since it will be included in the factor $1+o(1)$. By analogy, it is obvious that the estimate $\sqrt{1-k/l}=\sqrt{1-k/np}(1+o(1))$ holds.

The proof of the first part of the lemma is based on the estimates for $\overline{\pi}:= \sup_{l\in\Delta_{n}(N)}p_{M}^{(1)}(n,l,k,p)$ and $\underline{\pi}:=\inf_{l\in\Delta_{n}(N)}p_{M}^{(1)}(n,l,k,p)$. 
Let us prove that the relation $\lambda_{M}\overline{\pi}=o(1)$, if the condition \eqref{010} is satisfied, and with the help of this the inequality $\underline{\pi}\geq J_{M,n,k,p,N}(1+o(1))$.  

Let us move on to the estimates of $l^{k}/n^{k}$ and $D\big(k/l,k/n\big)$.

The $l\in\Delta_{n}(N)$ condition allows us to write the inequalities
\begin{eqnarray*}
&\dfrac{\big(np-N\sqrt{npq}\big)^{k}}{n^{k}}\leq l^{k}/n^{k}\leq\dfrac{\big(np+N\sqrt{npq}\big)^{k}}{n^{k}},\\
&D\bigg(\dfrac{k}{np+N\sqrt{npq}},\dfrac{k}{n}\bigg)\leq D\bigg(\dfrac{k}{l},\dfrac{k}{n}\bigg)\leq D\bigg(\dfrac{k}{np-N\sqrt{npq}},\dfrac{k}{n}\bigg).
\end{eqnarray*}

According to the Taylor formula for logarithm function, the following representations are valid in the neighborhood of the point 1
\[
\dfrac{\big(np\mp N\sqrt{npq}\big)^{k}}{n^{k}}{=}p^{k}e^{k\ln\left (1\mp \frac{N\sqrt{q}}{\sqrt{np}}\right )}
{=}p^{k}\exp\left \{\mp \frac{kN\sqrt{q}}{\sqrt{np}} -\dfrac{0.5kN^{2}q(1{+}o(1))}{np}\right \}.
\]

As a result, we get
\begin{equation}  \label{as1}
p^{k}\exp\left\{- \frac{kN\sqrt{q}}{\sqrt{np}}-\frac{0.5kN^{2}q(1{+}o(1))}{np}\right\}\leq l^{k}/n^{k}\leq p^{k}e^{\frac{kN\sqrt{q}}{\sqrt{np}}}.
\end{equation}

Using the identity 
\[
\dfrac{k}{np(1\pm x)}-\dfrac{k}{n}=\dfrac{kq}{np}\mp\dfrac{kN\sqrt{q}}{(np)^{3/2}(1\pm x)}
\]
for $x=N\sqrt{q}/\sqrt{np}\in(0,0.5)$ we have
\begin{eqnarray*}
\dfrac{k}{np+ N\sqrt{npq}}-\dfrac{k}{n}&\geq&\dfrac{kq}{np}-{\dfrac{kN\sqrt{q}}{(np)^{3/2}(1+x)}\geq\dfrac{kq}{np}-\dfrac{kN\sqrt{q}}{(np)^{3/2}},}\\
\dfrac{k}{np- N\sqrt{npq}}-\dfrac{k}{n}&\leq&\dfrac{kq}{np}+\dfrac{kN\sqrt{q}}{(np)^{3/2}(1-x)}\leq\dfrac{kq}{np}+\dfrac{2kN\sqrt{q}}{(np)^{3/2}},
\end{eqnarray*}
and the inequalities \eqref{main5D}, we obtain the estimates 
\begin{eqnarray} \nonumber
{\dfrac{kq}{2np}-\dfrac{kN\sqrt{q}}{2(np)^{3/2}}}&\leq& D\left(\frac{k}{l},\frac{k}{n}\right)\leq  \dfrac{2k(q+2N\sqrt{q}/\sqrt{np})}{2np\big(1-k\big(np- N\sqrt{npq}\big)^{-1}\big)}\\ \label{as3}
&\leq& c_{1}\dfrac{q+2N\sqrt{q}/\sqrt{np}}{2\big(1-c_{1}\big(1-N\sqrt{q/(np}\big)^{-1}\big)}=O(1),
\end{eqnarray}
where did we take into account that $np\to\infty$ and $1-N\sqrt{q/(pn}\big)^{-1}=1+o(1)$.

To apply relations \eqref{e01}, \eqref{main3}, \eqref{as1}, \eqref{as3} and \eqref{111} in the first part of Lemma, the estimations 
\begin{eqnarray} \nonumber
&0\leq\lambda_{M}\overline{\pi}\leq O(1)\sup_{l\in\Delta_{n}(N)} \lambda_{M}\big(l/n\big)^{k} e^{-kD\big(k/l,k/n\big)}\leq O(1)\lambda_{M}p^{k}\\ \label{as5}
&\cdot\ \exp\left \{\frac{kN\sqrt{q}}{\sqrt{np}}-\frac{k^{2}q(1{+}o(1))}{2np}\right\}=O(\lambda_{M})p^{k}\exp\left \{-\frac{k^{2}q(1{+}o(1))}{2np}\right \}=o(1)\ \ \
\end{eqnarray}
must be true. 
Here, on the two last staps we used the fact that the estimates $\frac{kN\sqrt{q}}{\sqrt{np}}-\frac{k^{2}q(1+o(1))}{2np}=O(1)$ are valid for $k/\sqrt{np}=O(1)$, $k/\sqrt{np}=o(k^{2}/(np))$ are valid for $k/\sqrt{np}\to\infty$ and then the condition \eqref{010}.

If the proof of relation \eqref{as5} is based on upper estimates for $p_{M}^{(1)}(n,l,k,p)$, $l\in\Delta_{n}(N)$, then to prove statement \eqref{1main3} we need these lower estimates. The schemes of their proofs coincide, but when estimating from the above double inequalities, in them it is necessary to take the opposite. 

Using inequalities \eqref{as1} and \eqref{as3}, we obtain the following estimates  
\begin{eqnarray} \nonumber
&&\Big(\frac{l}{n}\Big)^{k}\exp\Big\{{-}kD\Big(\frac{k}{l},\frac{k}{n}\Big)\Big\}\geq p^{k}\exp\left \{{-}\dfrac{kN\sqrt{q}}{\sqrt{np}}-\dfrac{kN^{2}q(1+o(1))}{2np}\right \}
\\ \nonumber
&\cdot& \exp\bigg \{-0.5\dfrac{k^{2}(q+N\sqrt{pq}/\sqrt{n})}{np\big(1-\frac{k}{np}\big(1- N\sqrt{q/(np)}\big)^{-1}\big)\big(1-N\sqrt{q}/\sqrt{np}\big)}\bigg\}   \\  \label{00as3}
&=& \frac{p^{k}(1+o(1))}{\exp\left \{
\dfrac{kN\sqrt{q}}{\sqrt{np}}+\dfrac{kN^{2}q}{2np}+\dfrac{k^{2}(q+N\sqrt{pq}/\sqrt{n})}{2np\big(1-N\sqrt{q}/\sqrt{np}-k/(np)\big)}\right \}}.  
\end{eqnarray} 

The estimate \eqref{00as3} is valid for $\ell\in \Delta_{n}(N)$ without additional conditions and can be written as $\Big(\frac{l}{n}\Big)^{k}\exp\Big\{{-}kD\Big(\frac{k}{l},\frac{k}{n}\Big)\Big\}\geq J_{M,n,k,p,N}$.
By conditions~\eqref{010e} and \eqref{010e00}, for $l\in\Delta_{n}(N)$ the last inequality and representation \eqref{main3} entail the bounds \eqref{1main30} and the first part of \eqref{1main3000}.

The bounds \eqref{as5}, \eqref{e01} and \eqref{00as3} and the representations \eqref{main3} and \eqref{111}  entail the inequality~\eqref{1main3}.
Taking into account the inequalities  \eqref{1main3} and 
\begin{equation} \label{00as}
p_{M}^{(\lambda_{M})}(n,k,p)\geq\mathbf{E}[p_{M}^{(\lambda_{M})}(n,\ell,k,p);\ell\in \Delta_{n}(N)],
\end{equation}
the conditional probability formula and relation \eqref{1}, leads to bound~\eqref{0main3}.

Finally, the bounds \eqref{0main30} and the second part of \eqref{1main3000} follow from the inequalities \eqref{1main30} together with the first part of \eqref{1main3000} and \eqref{00as}.
\end{proof}

In order to formulate a lower bound for the probability of ``successful'' crossover,
in Lemma~\ref{le01fc} below, we introduce
\begin{definition} \label{def:J_CnkpcN} Let
$$
J_{C,n,k,p,c,N}:=c^{k}(1-c)^{np+N\sqrt{npq}-k}.
$$
\end{definition}

Now we present a simple uniform estimate for $p_{C}^{(\lambda_{C})}(n,l,k,p,c)$. It can be strengthened, but this requires additional calculations and will not change the asymptotic behavior of its leading term.
\begin{lemma} \label{le01fc}
Suppose that $k\in\mathcal{P}_{k,p}(c_{1})$, $l\in\Delta_{n}(N)$, $p\in(0,p_{0})$ with any fixed $c_{1}\in[0,1)$, $N\in\mathbb{N}$, $p_{0}\in(0,1)$.  Let $k\to\infty$, $np\to\infty$, $l\in\Delta_{n}(N)\subseteq [k+1,n]$ and 
%the following bound holds
\begin{equation} \label{010c}
\lambda_{C}J_{C,n,k,p,c,N}=o(1),
\end{equation}
then for $n\to\infty$ the following asymptotic bound holds
\begin{equation} \label{1main03}
p_{C}^{(\lambda_{C})}(n,l,k,p,c)\geq \lambda_{C}J_{C,n,k,p,c,N}(1+o(1)).
\end{equation}

The right-hand side of the expression \eqref{1main03} will be  optimal for estimating the average number of fitness evaluations when the following condition  is satisfied:
\begin{equation}\label{m3ain3}
cp= kn^{-1}(1+N\sqrt{q}/\sqrt{np})^{-1}=kn^{-1}(1+o(1)).
\end{equation}

In the case $cp=kn^{-1}(1+o(1))$, the following inequalities hold:
\begin{eqnarray} \label{m1ain3}
&e^{-k(1+o(1))/(1-c)}\leq(1-c)^{np}\leq e^{-k(1+o(1))},\\ \label{m01ain3}
&e^{-N\sqrt{ck}(1+o(1))/(1-c)}\leq(1-c)^{N\sqrt{npq}}\leq e^{-N\sqrt{ck}(1+o(1))}.
\end{eqnarray}
\end{lemma}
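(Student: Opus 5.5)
The plan is to treat the three parts of Lemma~\ref{le01fc} separately: the lower bound \eqref{1main03}, the optimality condition \eqref{m3ain3}, and the inequalities \eqref{m1ain3}, \eqref{m01ain3}.

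\emph{Lower bound \eqref{1main03}.} Start from \eqref{Clam}, so that
\[
p_{C}^{(\lambda_{C})}(n,l,k,p,c)=1-(1-p_{2})^{\lambda_{C}},\qquad p_{2}=c^{k}(1-c)^{l-k}.
\]
Since $0<1-c<1$, the factor $(1-c)^{l-k}$ decreases in $l$. Hence for every $l\in\Delta_{n}(N)$, i.e.\ $l\le np+N\sqrt{npq}$, we have $p_{2}\ge J_{C,n,k,p,c,N}$. The map $x\mapsto 1-(1-x)^{\lambda_{C}}$ is increasing, so
\[
p_{C}^{(\lambda_{C})}\ge 1-(1-J_{C,n,k,p,c,N})^{\lambda_{C}}.
\]
By \eqref{010c}, $J_{C,n,k,p,c,N}\le \lambda_C J_{C,n,k,p,c,N}\to0$. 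Relation \eqref{111} then gives
\[
1-(1-J_{C,n,k,p,c,N})^{\lambda_{C}}=\lambda_{C}J_{C,n,k,p,c,N}(1+o(1)),
\]
which is \eqref{1main03}. Alternatively, use $1-(1-x)^{L}\ge Lx-\binom{L}{2}x^{2}=Lx(1-O(Lx))$ directly.

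\emph{Optimality \eqref{m3ain3}.} By \eqref{TI0} and \eqref{1lam0l1}, the bound on $\mathbf E[T_F]$ is proportional to $1/(\lambda_{C}J_{C,n,k,p,c,N})$, so it suffices to maximise $g(c)=k\ln c+(m-k)\ln(1-c)$ over $c\in(0,1)$, where $m:=np+N\sqrt{npq}$. Setting $g'(c)=k/c-(m-k)/(1-c)=0$ gives $c=k/m$. Multiplying by $p$,
\[
cp=\frac{kp}{np(1+N\sqrt{q}/\sqrt{np})}=kn^{-1}\bigl(1+N\sqrt q/\sqrt{np}\bigr)^{-1},
\]
and this equals $kn^{-1}(1+o(1))$ because $np\to\infty$. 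Concavity of $g$ shows the critical point is the maximum. The paper's formula has $k$ where a careful computation gives $k/(m)$ versus $k/(m-k+k)$; I would note that this is exact with exponent $m-k$ on $1-c$, as above.

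\emph{Inequalities \eqref{m1ain3} and \eqref{m01ain3}.} Use the elementary bounds $-x/(1-x)\le\ln(1-x)\le -x$ for $x=c\in(0,1)$. They give
\[
e^{-npc/(1-c)}\le(1-c)^{np}\le e^{-npc}.
\]
Substituting $npc=k(1+o(1))$ yields \eqref{m1ain3}. Similarly,
\[
e^{-cN\sqrt{npq}/(1-c)}\le(1-c)^{N\sqrt{npq}}\le e^{-cN\sqrt{npq}}.
\]
Here $c\sqrt{np}=\sqrt{c}\sqrt{cnp}=\sqrt{ck}(1+o(1))$, so $cN\sqrt{npq}=N\sqrt{ck}\sqrt{q}(1+o(1))$.

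The main obstacle is the factor $\sqrt q$, which \eqref{m01ain3} omits. Since $q\le1$, the upper bound becomes $e^{-N\sqrt{ck}\sqrt q(1+o(1))}$, which is weaker than stated unless $q\to1$. I would first try to absorb $\sqrt q$ into the $(1+o(1))$ factor in the regime $p\to0$. If $p$ stays bounded away from $0$, I would restate the bound with $\sqrt q$ included, or read $1+o(1)$ loosely, checking which reading the subsequent use of the lemma requires.
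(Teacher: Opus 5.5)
Your argument is correct and follows the paper's proof step for step: monotonicity of $(1-c)^{l-k}$ in $l$ combined with \eqref{111}, the critical point $c=k/(np+N\sqrt{npq})$ of $c^{k}(1-c)^{np+N\sqrt{npq}-k}$, and the bounds $-c/(1-c)\le\ln(1-c)\le -c$. In the first part, you bound $p_{2}\ge J_{C,n,k,p,c,N}$ before expanding $1-(1-x)^{\lambda_{C}}$. This ordering is cleaner than the paper's. The paper expands at each $l$ and so invokes $\lambda_{C}c^{k}(1-c)^{l-k}=o(1)$ for all $l\in\Delta_{n}(N)$. That condition is stronger than \eqref{010c}, because $J_{C,n,k,p,c,N}$ is the smallest of these values.

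Your $\sqrt{q}$ worry is justified and is not a gap on your side. The paper only says that \eqref{m01ain3} is ``proved similarly,'' but its upper half is false when $q\not\to 1$. Take $p=1/2$, $k=\sqrt{n}$, $c\sim 2/\sqrt{n}$, which satisfies all hypotheses. Then $(1-c)^{N\sqrt{npq}}=e^{-N(1+o(1))}$, while $N\sqrt{ck}\to\sqrt{2}\,N$. So keeping the factor $\sqrt{q}$, or assuming $p\to 0$, is the right repair. The lower half holds as stated, and nothing later in the paper invokes \eqref{m01ain3}.

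Your aside comparing $k/m$ with $k/(m-k+k)$ is vacuous, since the two coincide, and can be dropped.
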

\begin{proof}
Due to the monotone decrease of the function $(1-c)^{x}$, in view of the representation~\eqref{Clam}, and the condition $\lambda_{C}c^{k}(1-c)^{l-k}= o(1)$ for $l\in\Delta_{n}$, with $n\to\infty$ we have
\begin{eqnarray*}
&&p_{C}^{(\lambda_{C})}(n,l,k,p,c)=1-\exp\big\{\lambda_{C}\ln\big(1-p_{C}^{(1)}(n,l,k,p,c)\big)\big\}\\
&=&1-\exp\big\{-\lambda_{C}c^{k}(1-c)^{l-k}(1+o(1))\big\} \\
&=& \lambda_{C}c^{k}(1-c)^{l-k}(1+o(1))\geq \lambda_{C} c^{k}(1-c)^{np+N\sqrt{npq}-k}(1+o(1)).
\end{eqnarray*}

Let us find the maximum of the function $c^{k}(1-c)^{np+N\sqrt{npq}-k} $ for $c\in(0,1)$, which, according to~\eqref{TI0}, minimizes one of the factors in the upper bound for the average number of fitness evaluations till reaching the optimum, $\mathbf{E}[T_{F}]$. Setting its derivative to zero and canceling out the common factors,
we obtain $k(1-c)=c(np+N\sqrt{npq}-k)$ or
$$
c=k/(np+N\sqrt{npq})= (1+N\sqrt{q})/\sqrt{np})^{-1}k/(np) ,
$$
which proves condition~\eqref{m3ain3}.

In the case $cp=kn^{-1}(1+o(1))$, taking into account the bounds \eqref{ln1}, we obtain
\begin{equation*}
e^{-\frac{k(1+o(1))}{1-c}}\leq(1-c)^{np}=e^{\ln(1-c)np}\leq e^{-cnp}=e^{-k(1+o(1))},
\end{equation*}
which proves inequalities~\eqref{m1ain3}. Bounds~\eqref{m01ain3} are proved similarly.
\end{proof}

Let us proceed to the description of the asymptotics of $p_{\lambda_{M},\lambda_{C}}(n,l,k,p,c)$ under the conditions of Lemmas~\ref{le01} and~\ref{le01fc}.
\begin{lemma} \label{le01f}
Suppose that $k\in\mathcal{P}_{k,p}(c_{1})$, $l\in\Delta_{n}(N)$, $p\in(0,p_{0})$ with any fixed $c_{1}\in[0,1)$, $N\in\mathbb{N}$, $p_{0}\in(0,1)$.  Let $k\to\infty$, $np\to\infty$, $l\in\Delta_{n}(N)\subseteq [k+1,n]$ and bounds \eqref{010} and \eqref{010c} hold, then  the following asymptotic bounds hold as $n\to\infty$:
\begin{eqnarray} \label{main300}
p_{\lambda_{M},\lambda_{C}}(n,l,k,p,c)&\geq& \lambda_{M}\lambda_{C}J_{M,n,k,p,N}J_{C,n,k,p,c,N} (1+o(1)), \\ \label{mai1n3}
p_{\lambda_{M},\lambda_{C}}(n,k,p,c)&\geq& c_{\Phi}(N) \lambda_{M}\lambda_{C}J_{M,n,k,p,N}J_{C,n,k,p,c,N}(1{+}o(1)).
\end{eqnarray}

Assuming that $\lambda_{M}=\lambda_{C}=\lambda$, and  the conditions~\eqref{010e} or {\eqref{010e00} are satisfied instead of constraint~\eqref{010}, then   we have the inequalities}
\begin{eqnarray} \label{0m1ain3}
p_{\lambda_{M},\lambda_{C}}(n,l,k,p,c)\geq \lambda(1-e^{-c_{M}^{(0)}})c^{k}(1-c)^{np+N\sqrt{npq}-k}(1+o(1)),\quad \\ \label{m1ai1n3}
p_{\lambda_{M},\lambda_{C}}(n,k,p,c)\geq c_{\Phi}(N) \lambda(1-e^{-c_{M}^{(0)}})c^{k}(1-c)^{np+N \sqrt{npq}-k}(1+o(1)), 
\end{eqnarray}
under the condition \eqref{010e} or
\begin{eqnarray} \label{0m1ain300}
&p_{\lambda_{M},\lambda_{C}}(n,l,k,p,c)\geq \lambda c^{k}(1-c)^{np+N\sqrt{npq}-k}(1+o(1)),\\ \label{m1ai1n300}
&p_{\lambda_{M},\lambda_{C}}(n,k,p,c)\geq c_{\Phi}(N)\lambda c^{k} (1-c)^{np{+}N\sqrt{npq}-k}(1+o(1))
\end{eqnarray}
under the condition \eqref{010e00}.
\end{lemma}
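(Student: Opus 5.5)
The plan is to combine Lemmas~\ref{le01} and~\ref{le01fc} through the product structure $p_{\lambda_{M},\lambda_{C}}(n,l,k,p,c)=p_{M}^{(\lambda_{M})}(n,l,k,p)\,p_{C}^{(\lambda_{C})}(n,l,k,p,c)$. This identity holds for each fixed $l$, because conditionally on $\ell=l$ the crossover events depend only on $\mathrm{x}'$ having norm $n-l+k$. For each $l\in\Delta_{n}(N)$, bound~\eqref{1main3} of Lemma~\ref{le01} gives $p_{M}^{(\lambda_{M})}(n,l,k,p)\geq\lambda_{M}J_{M,n,k,p,N}(1+o(1))$. Bound~\eqref{1main03} of Lemma~\ref{le01fc} gives $p_{C}^{(\lambda_{C})}(n,l,k,p,c)\geq\lambda_{C}J_{C,n,k,p,c,N}(1+o(1))$. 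The hypotheses of both lemmas are exactly those assumed here. Multiplying the two bounds yields~\eqref{main300}, where we take care that the $o(1)$ terms are uniform in $l\in\Delta_{n}(N)$. Uniformity holds because both lemmas were derived from bounds via $\sup$ and $\inf$ over $\Delta_{n}(N)$: namely $\overline{\pi}$, $\underline{\pi}$, and the monotonicity of $(1-c)^{x}$.

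For~\eqref{mai1n3} we average over $\ell$, following the scheme of~\eqref{1lam0l1}. Since all terms are nonnegative,
$$
p_{\lambda_{M},\lambda_{C}}(n,k,p,c)\geq\sum\nolimits_{l\in\Delta_{n}(N)}p_{\lambda_{M},\lambda_{C}}(n,l,k,p,c)P_{p}(n,l).
$$
Inserting the uniform bound~\eqref{main300} gives the lower bound $\lambda_{M}\lambda_{C}J_{M,n,k,p,N}J_{C,n,k,p,c,N}(1+o(1))\,\mathbf{P}(\ell\in\Delta_{n}(N))$. We then apply the de Moivre--Laplace relation~\eqref{1}, which is valid since $N$ is fixed and $npq\to\infty$ (because $p<p_{0}<1$ and $np\to\infty$). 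It replaces the probability by $c_{\Phi}(N)(1+o(1))$.

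For the second part, set $\lambda_{M}=\lambda_{C}=\lambda$ and replace~\eqref{010} by~\eqref{010e} or~\eqref{010e00}. The mutation factor is then no longer linear in $\lambda$. Instead we use the first part of Lemma~\ref{le01}'s second half, \eqref{1main30} under~\eqref{010e}, which gives the constant lower bound $(1-e^{-c_{M}^{(0)}})(1+o(1))$ uniformly on $\Delta_{n}(N)$. Under~\eqref{010e00} we use the first part of~\eqref{1main3000}, which gives $1+o(1)$. The crossover factor is still controlled by Lemma~\ref{le01fc} via~\eqref{010c}, giving $\lambda c^{k}(1-c)^{l-k}(1+o(1))\geq\lambda c^{k}(1-c)^{np+N\sqrt{npq}-k}(1+o(1))$. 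Multiplying yields~\eqref{0m1ain3} and~\eqref{0m1ain300}. Averaging over $\ell\in\Delta_{n}(N)$ exactly as above with~\eqref{1} yields~\eqref{m1ai1n3} and~\eqref{m1ai1n300}.

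The only delicate point is the uniformity of the $(1+o(1))$ factors over $l\in\Delta_{n}(N)$, which is needed to pull them out of the sum. I would handle it by noting that each lemma's bound is really a bound on $\inf_{l\in\Delta_{n}(N)}$. For the mutation factor, this is $\underline{\pi}\geq J_{M,n,k,p,N}(1+o(1))$ combined with the uniform $\lambda_{M}\overline{\pi}=o(1)$ from~\eqref{as5}, which is what allows the linearization~\eqref{111}. For the crossover factor, uniformity follows from the monotonicity argument in Lemma~\ref{le01fc}. Everything else is direct substitution.
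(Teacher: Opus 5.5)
Your proposal is correct and matches the paper's proof. The paper likewise combines the per-$l$ lower bounds of Lemmas~\ref{le01} and~\ref{le01fc} through the product structure and the averaging scheme \eqref{1lam0l1}--\eqref{TI11}, with $\mathbf{P}(\ell\in\Delta_{n}(N))\sim c_{\Phi}(N)$ from \eqref{1}, and your explicit handling of the uniformity of the $(1+o(1))$ factors over $\Delta_{n}(N)$ is a detail the paper leaves implicit. One small refinement: the intermediate linearization $\lambda c^{k}(1-c)^{l-k}(1+o(1))$ is not justified by \eqref{010c} alone near the left end of $\Delta_{n}(N)$, but it is also unnecessary, since monotonicity of $x\mapsto 1-(1-x)^{\lambda_{C}}$ gives $p_{C}^{(\lambda_{C})}(n,l,k,p,c)\geq 1-e^{-\lambda_{C}J_{C,n,k,p,c,N}}=\lambda_{C}J_{C,n,k,p,c,N}(1+o(1))$ directly and uniformly in $l\in\Delta_{n}(N)$.
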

\begin{proof} The inequalities \eqref{main300} -- \eqref{m1ai1n3} follow from estimations \eqref{1lam0l1} -- \eqref{TI11} and Lemmas \ref{le10} -- \ref{le01fc}.
\end{proof}

\subsection{Asymptotics of $\mathbf{E}[T_{I}]$ and $\mathbf{E}[T_{F}]$ in the General Case}

In view of relations \eqref{TI0}, the assertion of Lemma~\ref{le01f} 
implies a generalization of Theorem~8 from~\cite{ADK22} in the case 
of $np\to\infty$:  

\begin{theorem} \label{Theo1} $(I)$~Suppose that $k\in\mathcal{P}_{k,p}(c_{1})$,  $p\in(0,p_{0})$ with any fixed $c_{1}\in[0,1)$, $N\in\mathbb{N}$, $p_{0}\in(0,1)$.  Let $k\to\infty$, $np\to\infty$ and the conditions \eqref{010} and \eqref{010c} are satisfied, then the inequalities
\begin{eqnarray*} 
\mathbf{E}[T_{I}]&\leq& \dfrac{(1+o(1))}{c_{\Phi}(N) \lambda_{M}\lambda_{C}J_{M,n,k,p,N}J_{C,n,k,p,c,N}}\\ 
\mathbf{E}[T_{F}]&\leq& \dfrac{(\lambda_{M}+\lambda_{C})(1+o(1))}{c_{\Phi}(N) \lambda_{M}\lambda_{C}J_{M,n,k,p,N}J_{C,n,k,p,c,N}}.
\end{eqnarray*}
hold.

$(II)$~If we additionally assume that $\lambda_{M}=\lambda_{C}=\lambda$, and conditions \eqref{010e} and \eqref{010c} are satisfied, then  the inequalities
\begin{eqnarray} \label{PTh3}
&\mathbf{E}[T_{I}]\leq\Big(c_{\Phi}(N)\lambda c^{k}(1-c)^{np+N\sqrt{npq}-k}\big(1-e^{-c_{M}^{(0)}}\big)\Big)^{-1}(1+o(1)),\\ \label{PTh300}
&\mathbf{E}[T_{I}]\leq\Big(c_{\Phi}(N)\lambda c^{k}(1-c)^{np+N\sqrt{npq}-k}\Big)^{-1}(1+o(1)),
\end{eqnarray}
hold  under the conditions \eqref{010e} and \eqref{010e00}, accordingly. Besides that, the inequality
\begin{equation*}
\mathbf{E}[T_{F}]\leq 2\lambda\mathbf{E}[T_{I}]
\end{equation*}
holds.

\end{theorem}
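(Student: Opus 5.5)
The theorem follows by combining the per-iteration success bounds of Lemma~\ref{le01f} with the geometric-distribution identity \eqref{TI0}. The algorithm starts at a local optimum $|\mathrm{x}|=n-k$. By the fitness structure of $\jump_k$ and the acceptance rule in Line~16, the parent either stays on the plateau of local optima or jumps to $\mathrm{x_m}$: a point with $n-k<|\mathrm{y}|<n$ has smaller fitness, and a point with $|\mathrm{y}|<n-k$ has smaller fitness too. Every non-terminal iteration therefore starts from a string of norm $n-k$. The success probability $p_{\lambda_M,\lambda_C}(n,k,p,c)$ depends only on $n,k,p,c,\lambda_M,\lambda_C$, not on which plateau point is the parent, since the hypergeometric and crossover computations are symmetric. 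Hence $T_I$ is geometric and $\mathbf{E}[T_I]=p_{\lambda_M,\lambda_C}^{-1}(n,k,p,c)$. Each iteration costs at most $\lambda_M+\lambda_C$ fitness evaluations, so $\mathbf{E}[T_F]\le(\lambda_M+\lambda_C)\mathbf{E}[T_I]$.

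For part $(I)$, the hypotheses are exactly those of Lemma~\ref{le01f} under \eqref{010} and \eqref{010c}. Inequality \eqref{mai1n3} gives
\[
p_{\lambda_M,\lambda_C}(n,k,p,c)\ge c_\Phi(N)\lambda_M\lambda_C J_{M,n,k,p,N}J_{C,n,k,p,c,N}(1+o(1)).
\]
The right-hand side is positive, and inverting a factor $1+o(1)$ gives another factor $1+o(1)$. Taking reciprocals and multiplying by $\lambda_M+\lambda_C$ yields both displayed inequalities.

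For part $(II)$, set $\lambda_M=\lambda_C=\lambda$. Under \eqref{010e} together with \eqref{010c}, use \eqref{m1ai1n3}; under \eqref{010e00}, use \eqref{m1ai1n300}. Inverting each bound gives \eqref{PTh3} and \eqref{PTh300} respectively, and $\mathbf{E}[T_F]\le2\lambda\mathbf{E}[T_I]$ is the cost bound above with $\lambda_M+\lambda_C=2\lambda$.

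The only delicate step is that Lemma~\ref{le01f} bounds the success probability only on the event $\ell\in\Delta_n(N)$. The averaged bound must be obtained by restricting the expectation to this event, as in \eqref{00as}, and using the de Moivre--Laplace relation \eqref{1}. Two points need checking there. First, $N$ is fixed, so $N=o((npq)^{1/6})$ holds because $np\to\infty$ and $q\ge1-p_0$. Second, the inclusion $\Delta_n(N)\subseteq[k+1,n]$ is guaranteed by $k\in\mathcal{P}_{k,p}(c_1)$ for large $n$. Once these are verified, the remaining steps are direct substitutions into \eqref{TI0}.
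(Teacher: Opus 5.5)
Your proposal follows the paper's own argument: Theorem~\ref{Theo1} is obtained there by inverting the averaged bounds \eqref{mai1n3}, \eqref{m1ai1n3}, \eqref{m1ai1n300} of Lemma~\ref{le01f} through the geometric identity \eqref{TI0}, together with $\mathbf{E}[T_F]\le(\lambda_M+\lambda_C)\mathbf{E}[T_I]$, and your added checks are correct relative to the paper's lemmas (plateau invariance, symmetry over plateau points, $N=o((npq)^{1/6})$, and $\Delta_n(N)\subseteq[k+1,n]$ for large $n$). One caveat is shared with the paper rather than introduced by you: equating the per-iteration success probability with $\mathbf{P}(M^{(\lambda_M)}C^{(\lambda_C)})$ requires the mutant that flips all $k$ zeros to be the one selected in Line~9, which holds for $\ell\ge 2k$ (or $\lambda_M=1$) but fails for $k<\ell<2k$, where that mutant has norm $n-\ell+k>n-k$ and fitness $\ell-k<k$ and so loses to every mutant that did not flip all $k$ zeros, and on $\Delta_n(N)$ the hypothesis $k\in\mathcal{P}_{k,p}(c_1)$ guarantees $\ell\ge 2k$ only when $c_1<1/2$.
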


The case of uniformly bounded $np$ is not considered here because it must be investigated using other asymptotic methods, in particular, with the help of Poisson approximations.

In order to discuss the difference between the Theorem~\ref{Theo1} above and the
Theorem~8 from~\cite{ADK22}, let us reproduce the formulation 
of the latter.

\begin{theorem}[Theorem 8 \cite{ADK22}] \label{Theo1Do}
Let $k\leq n/4$. Assume that $p\geq2k/n$ and $q_{\ell}=\mathbf{P}(\{\ell\geq pn\}\cap \{\ell \leq 2pn\})>0.1$. Then the expected runtime of $(1 + (\lambda, \lambda))$ GA on $\jump_{k}$ is
\begin{equation} \label{A22BK}
\mathbf{E}[T_{I}]\leq\dfrac{4}{q_{l}\min\{1,\lambda_{M}(p/2)^{k}\}\min\{1,\lambda_{C}c^{k}(1-c)^{2pn-k}\}}
\end{equation}
iterations and
\begin{equation} \label{A22BFK}
\mathbf{E}[T_{F}]\leq\dfrac{4(\lambda_{M}+\lambda_{C})}{q_{l}\min\{1,\lambda_{M}(p/2)^{k}\}\min\{1,\lambda_{C}c^{k}(1-c)^{2pn-k}\}}
\end{equation}
fitness evaluations if the algorithm starts in the local optimum.
\end{theorem}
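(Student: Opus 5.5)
The statement is Theorem~8 of \cite{ADK22}, quoted for comparison, so the proof I would give follows the general scheme of Section~4 with the set $B=[pn,2pn]$ instead of $\Delta_n(N)$. The plan has four steps: bound the mutation success probability on $B$, bound the crossover success probability on $B$, combine them through \eqref{1lam0l1}, and read off the runtimes from the geometric distribution in \eqref{TI0}.

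\emph{Mutation phase.} Fix $l\in B$. Since $p\ge 2k/n$, we have $l\ge pn\ge 2k$. Formula~\eqref{main4} gives
\[
p_M^{(1)}(n,l,k,p)=\prod_{i=0}^{k-1}\frac{l-i}{n-i}\ \ge\ \Big(\frac{l-k}{n}\Big)^k\ \ge\ \Big(\frac{l}{2n}\Big)^k\ \ge\ (p/2)^k .
\]
Next I use the elementary inequality $1-(1-x)^{L}\ge 1-e^{-Lx}\ge (1-e^{-1})\min\{1,Lx\}$, which is \eqref{e1} combined with concavity of $1-e^{-y}$. It yields
\[
p_M^{(\lambda_M)}(n,l,k,p)\ \ge\ (1-e^{-1})\min\{1,\lambda_M(p/2)^k\}=:C_M .
\]

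\emph{Crossover phase.} For $l\le 2pn$ the factor $(1-c)^{l-k}$ is decreasing in $l$, so $c^k(1-c)^{l-k}\ge c^k(1-c)^{2pn-k}$. The same inequality then gives
\[
p_C^{(\lambda_C)}\ \ge\ (1-e^{-1})\min\{1,\lambda_C c^k(1-c)^{2pn-k}\}=:C_C .
\]

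\emph{Combining.} The mutation and crossover events are conditionally independent given $\ell=l$, so \eqref{TI11} with $C_B=q_\ell$ gives
\[
p_{\lambda_M,\lambda_C}(n,k,p,c)\ \ge\ q_\ell\,(1-e^{-1})^2\min\{\cdot\}\min\{\cdot\}.
\]
Since $(1-e^{-1})^{-2}\approx 2.5\le 4$, inverting this bound yields \eqref{A22BK}.

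\emph{Runtime and the main obstacle.} $T_I$ counts iterations until the first success, and only the event $A_{\lambda_M,\lambda_C}$ moves $\mathrm{x}$ off the plateau: any accepted $\mathrm{y}$ must have fitness $\ge f(\mathrm{x})$, which forces $\mathrm{y}$ to be another local optimum or the global one. From every local optimum the success probability is the same, so $T_I$ is stochastically dominated by a geometric variable with success probability at least the bound above. Each iteration costs $\lambda_M+\lambda_C$ evaluations, which gives \eqref{A22BFK}. I expect the main point needing care to be this plateau argument, namely checking that the parent always stays at distance exactly $k$ so that the per-iteration bound applies uniformly. The condition $k\le n/4$ is used only to guarantee $B\subseteq[2k,n]$, which keeps the estimates above valid.
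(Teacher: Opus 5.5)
Your proof is correct and follows the same approach the paper indicates. It is the scheme of Section~4 (bounds \eqref{1lam0l1}--\eqref{TI11} together with the geometric distribution of $T_I$ as in \eqref{TI0}) with the set $B=[pn,2pn]$, which is the choice the paper attributes to the proofs of Lemma~6 and Theorem~8 in \cite{ADK22}; the paper itself only quotes this theorem and gives no proof of its own. Your constant $(1-e^{-1})^{-2}\approx 2.5$ fits inside the stated $4$.

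One small correction: the property $l\ge 2k$ on $B$ comes from $p\ge 2k/n$, not from $k\le n/4$. You need it both for $(l-k)/n\ge l/(2n)$ and, implicitly, to guarantee that a mutant flipping all $k$ zero-bits (norm $n-l+k\le n-k$) wins the mutation-phase selection. Neither $k\le n/4$ nor $q_\ell>0.1$ is actually used in deriving the bound.
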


Let us compare the statement of Theorem~\ref{Theo1} above
and Theorem~8 from~\cite{ADK22} using the example of $\mathbf{E}[T_{I}]$. 
Let us note the advantages of the obtained formula~\eqref{PTh3} in comparison with the one just given in \eqref{A22BK}.
Instead of $(p/2)^{k}$, we use $ J_{M,n, k,p,N}$, which can be estimated in the case $k=o(np)$ by the value $p^{k}e^{-Ck^{2}/(np)}\gg (p/2)^{k}$, where $C$ is a certain constant.
Furthermore, instead of $(1-c)^{2pn-k}$ from Theorem~8~\cite{ADK22}, we use use an asymptotically greater value $(1-c)^{np+N\sqrt{npq}-k}$. Instead of the estimate $\min\{1,a\}$, we use the more precise formula $1-e^{-a}$. The constant $4/q_{l}$ corresponds to $c_{\Phi}^{-1}(N)$ here, which can be chosen arbitrarily close to 1. In addition, all constants in the bounds are given explicitly here. 
%The bounds from Theorem \ref{Theo1} significantly refine their analogs from \cite{ADK22}. 
Naturally, our upper bounds for $\lambda_{M}$ and $\lambda_{C}$, leading to infinitesimal $p_{\lambda_{M},\lambda_{C}}(n,k,p,c) $ are significantly lower than the analogs from~\cite{ADK22}.

A significant difference between the studies in \cite{ADK22} and ours is the range for $l$ in the estimates of $p_{M}^{(1)}(n,l,k,p)$ and $p_{C}^{(1)}(n,l,k,p,c)$. In \cite{ADK22}, it is $np\leq l\leq 2np$, while in our study, it $l\in\Delta_{n}(N)$.
It is obvious that if we additionally require the condition $l\geq np$ in the conditions of Theorem 1, then all components containing the factor $N$ should be removed from the conditions and statements, and $c_{\Phi}(N)$ should be replaced with $0.5c_{\Phi}(N)$. In this version of Theorem, our result remains more precise than Theorem 8 from \cite{ADK22}. The effect of changing the lower bound for $l$ in the interval $[np-N\sqrt{npq},np]$ is easy to investigate, but we will not do so. We will only note that some of the components in the average estimate increase, while others decrease.

Let us consider a very simple case of estimates for the means, where the parameter $k$ is finite and does not depend on the others that depend on $n\to\infty$. These estimates are obtained using the central limit theorem.

Set $\Delta:=\sqrt{npq}$ $B=\Delta_{n}(N_{1},N_{2}):=[np - N_{1}\Delta,np + N_{2}\Delta]$, where $N_{1},N_{2}\in\mathbb{R}^{+}$. 
By the de Moivre–Laplace Theorem (see e.g. Theorem~5.3.1 from~\cite{Bo99})
for $N_{1},N_{2}=o\big((npq)^{1/6}\big)$ (possibly dependent on $n$)  we have
\begin{equation} \label{1aa}
\mathbf{P}(\ell\in\Delta_{n}(N_{1},N_{2}))\sim \Phi(N_{1})+\Phi(N_{2})-1=: c_{\Phi}(N_{1},N_{2}),
\end{equation}
where $\Phi(N)$ is the cumulative distribution function of the standard normal random value. 

\begin{lemma} \label{Le001}
Suppose that $p=p(n)\in(0,p_{0})\subseteq (0,1)$, $c\in (0,c_{0})\subseteq (0,1)$, where $p_{0}>0$ and $c_{0}>0$ are fixed, and $np\to\infty$. 
Then for any fixed $k$, $2\leq k\in\mathbb{N},$ 
%independent of $n$ 
the following bounds hold  
\begin{eqnarray} \label{0PTh1}
&\mathbf{E}[T_{I}]\leq\dfrac{c_{\Phi}^{-1}(N_{1},N_{2})(1+o(1))}{\big(1-e^{-\lambda_{M}p^{k}(1+o(1))}\big)\big(1-e^{-\lambda_{C}c^{k}(1-c)^{np+N_{2}\Delta-k}(1+o(1))}\big)},
\\ \label{0PTh2}
&\mathbf{E}[T_{F}]\leq (\lambda_{M}+\lambda_{C})\mathbf{E}[T_{I}],
\end{eqnarray}
where $N_{1},N_{2}=o\big((npq)^{1/6}\big)$.
\end{lemma}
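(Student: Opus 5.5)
We follow the scheme of Section~4 with $B=\Delta_{n}(N_{1},N_{2})$. The plan is to bound the mutation and crossover success probabilities uniformly on $B$. These bounds then enter \eqref{TI11} together with the de Moivre--Laplace estimate \eqref{1aa} for $\mathbf{P}(\ell\in B)$. Since $T_{I}$ is geometric, $\mathbf{E}[T_{I}]=p_{\lambda_{M},\lambda_{C}}^{-1}(n,k,p,c)$. Each iteration costs $\lambda_{M}+\lambda_{C}$ fitness evaluations, which gives \eqref{0PTh2} at once from \eqref{TI0}.

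\emph{Mutation.} For fixed $k$ and $l\in B$ we have $l\ge np-N_{1}\Delta\to\infty$ and $l/(np)=1+O(N_{1}\sqrt{q/(np)})=1+o(1)$. By \eqref{main4},
$$p_{M}^{(1)}(n,l,k,p)=\prod_{i=0}^{k-1}\frac{l-i}{n-i}=\Big(\frac{l}{n}\Big)^{k}(1+O(k^{2}/l))=p^{k}\Big(\frac{l}{np}\Big)^{k}(1+o(1)).$$
Because $k$ is fixed, $(l/(np))^{k}=1+o(1)$ uniformly on $B$, since $kN_{1}\sqrt{q/(np)}\to 0$. Hence $p_{M}^{(1)}=p^{k}(1+o(1))\to 0$ uniformly, using $p<p_{0}<1$. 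By \eqref{e1} and $-\ln(1-x)\ge x$,
$$p_{M}^{(\lambda_{M})}(n,l,k,p)\ge 1-e^{-\lambda_{M}p^{k}(1+o(1))}=:C_{M}.$$
Note that this needs no condition on $\lambda_{M}$, unlike Lemma~\ref{le01}. Nor do we need the Stirling machinery of Lemma~\ref{le3}, because $k$ is fixed.

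\emph{Crossover.} By \eqref{Clam}, $p_{C}^{(1)}=c^{k}(1-c)^{l-k}$ is decreasing in $l$. So on $B$ it is at least $c^{k}(1-c)^{np+N_{2}\Delta-k}$. Again by \eqref{e1},
$$p_{C}^{(\lambda_{C})}\ge 1-e^{-\lambda_{C}c^{k}(1-c)^{np+N_{2}\Delta-k}}=:C_{C}.$$
The factor $(1+o(1))$ in the exponent of \eqref{0PTh1} is harmless; here it may be taken equal to $1$. Independence of the two phases given $\ell=l$ (the product formula for $p_{\lambda_{M},\lambda_{C}}(n,l,k,p,c)$) yields $p_{\lambda_{M},\lambda_{C}}(n,l,k,p,c)\ge C_{M}C_{C}$ for all $l\in B$. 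Averaging over $\ell$ and restricting to $\{\ell\in B\}$ gives
$$p_{\lambda_{M},\lambda_{C}}(n,k,p,c)\ge C_{M}C_{C}\,\mathbf{P}(\ell\in B)=C_{M}C_{C}\,c_{\Phi}(N_{1},N_{2})(1+o(1)).$$
Inverting this gives \eqref{0PTh1}.

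\emph{Main obstacle.} The only delicate point is the uniformity of the $1+o(1)$ in the mutation estimate over $l\in B$. The dependence on $\lambda_{M}$ must sit inside the exponent, so that we do not need $\lambda_{M}p^{k}\to0$. This is handled by noting that $1-e^{-x}$ is monotone in $x$. Consequently a uniform lower bound $p_{M}^{(1)}\ge p^{k}(1-\delta_{n})$ with $\delta_{n}\to0$ transfers directly into the exponent. Such a bound holds with $\delta_{n}=O\big(kN_{1}\sqrt{q/(np)}+k^{2}/(np)\big)$, and that suffices.
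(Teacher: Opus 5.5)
Your proof is correct and follows the paper's route: uniform lower bounds on $p_{M}^{(1)}$ and $p_{C}^{(1)}$ over $B=\Delta_{n}(N_{1},N_{2})$, then \eqref{e1}, \eqref{TI11} and \eqref{1aa}. The only difference is that the paper gets the mutation bound from monotonicity in $l$ evaluated at $l_{0}=np-N_{1}\Delta$, which is equivalent to your uniform product estimate; your aside that $p_{M}^{(1)}=p^{k}(1+o(1))\to 0$ is false when $p$ stays bounded away from $0$, but you never use it, since \eqref{e1} holds for every $p_{1}\in[0,1)$.
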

\begin{proof} Given the definitions \eqref{main4} and \eqref{Clam}, by the monotone increasing of $p_{M}^{(1)}(n,l,k,p)$ decreasing $p_{C}^{(1)}(n,l,k,p,c)$ with respect to $l$, we obtain the estimates $p_{M}^{(1)}(n,l,k,p)\geq p_{M}^{(1)}(n,np-N_{1}\Delta,k,p)$ and $p_{C}^{(1)}(n,l,k,p,c)\  \geq p_{C}^{(1)}(n,np +N_{2}\Delta,k,p,c)$
for all $l\in\Delta_{n}(N_{1},N_{2})$. Therefore, taking into account the inequalities \eqref{e1}, we obtain:
\begin{eqnarray} \nonumber
&&p_{\lambda_{M},\lambda_{C}}(n,l,k,p,c)\geq \left (1-e^{-\lambda_{M}p_{M}^{(1)}(n,np-N_{1}\Delta,k,p)(1+o(1))}\right )\\\label{1PTh2}
&\cdot&\left (1-e^{-\lambda_{C}p_{C}^{(1)}(n,np +N_{2}\Delta,k,p,c)(1+o(1))}\right).
\end{eqnarray}

Note that by the definitions \eqref{main4} for $n\to\infty$ and fixed $k$ and $N_{1}>0$,  denoting $l_{0}:=np-N_{1}\Delta$, we obtain
the following representation:
\begin{equation}\label{333}
p_{M}^{(1)}(n,l_{0},k,p)=n^{-k}l_{0}^{k}(1+o(1))=p^{k}(1+o(1)).
\end{equation}
Statements \eqref{0PTh1} and \eqref{0PTh2} are consequences of relations \eqref{TI0}, \eqref{TI11}, \eqref{333} and \eqref{1aa} when the values of probabilities corresponding to the lemma statement are substituted.
\end{proof}
 
We will not go into details optimizing the results of 
Theorem~\ref{Theo1} with respect to the parameters $\lambda_{M}$ 
and $\lambda_{C}$. Instead, in 
Subsections~\ref{subsec:finite_k} and~\ref{subsec:infinite_k} below 
we will use the conditions of Corollary~9 \cite{ADK22} with the 
additional assumption $np\to\infty$ and present in Theorems~\ref{Theo01} and~\ref{Theo2} two analogs of the 
Corollary~9 from~\cite{ADK22},
%The following two theorems are analogous to Corollary~9  
%from~\cite{ADK22}, 
where instead of abstract upper bounds on the 
parameters $\lambda_{M}$ and $\lambda_{C}$, we choose 
their explicit asymptotic representations. As a result, instead 
of upper bounds up to an unknown constant, we obtain explicit 
expressions. 
It is natural to divide the general case into 
two: $k$ is finite (which is the same as $k$ is bounded uniformly w.r.t. $n$) and $k\to\infty$.

\subsection{Asymptotics of $\mathbf{E}[T_{I}]$ and $\mathbf{E}[T_{F}]$ for finite $k$ and $\lambda_{M}=\lambda_{C}=\lambda$}
\label{subsec:finite_k}

\begin{theorem} \label{Theo01}
Assume that $2\leq k\in\mathbb{N}$ is fixed, $\lambda_{M}=\lambda_{C}=\lambda=\alpha \sqrt{n/k}^{k}$, where $\alpha =\alpha (n)$, and $p=c=\sqrt{k/n}$, then 

$(I)$ in the case  of $\varliminf\limits_{n\to\infty}\alpha (n)>0$ we have
\begin{eqnarray} \label{0PTh3}
&\mathbf{E}[T_{I}]\leq\dfrac{1+o(1)}{\big(1-e^{-\alpha (1+o(1))}\big)\big(1-\exp\{-e^{-k} \alpha (1+o(1))\}\big)},\\ \label{0PTh4}
&\mathbf{E}[T_{F}]\leq 2\lambda\mathbf{E}[T_{I}]=2\alpha \sqrt{n/k}^{k}\mathbf{E}[T_{I}],
\end{eqnarray} 

$(II)$ in the case of
%$\lambda\in\mathbb{N}$ and 
$\alpha  (n)=o(1)$ we have
\begin{equation} \label{0PTh03}
\mathbf{E}[T_{I}]\leq \dfrac{ e^{k}(1+o(1))}{\alpha^2},
\quad \mathbf{E}[T_{F}]\leq \dfrac{2e^{k}\sqrt{n/k}^{k}(1+o(1))}{\alpha}.
\end{equation}
\end{theorem}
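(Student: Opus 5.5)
The plan is to obtain both parts as a direct specialization of Lemma~\ref{Le001}, with $N_{1}=N_{2}=N$ and $N=N(n)\to\infty$ slowly, so that $c_{\Phi}(N,N)\to1$. For definiteness I would take $N=(npq)^{1/12}$.

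First I check the hypotheses of Lemma~\ref{Le001}. With $p=c=\sqrt{k/n}$ we have $p,c\to0$, so $p<p_{0}$ and $c<c_{0}$ for large $n$. Moreover $np=\sqrt{kn}\to\infty$. The choice $N=(npq)^{1/12}$ satisfies $N=o\big((npq)^{1/6}\big)$, as the lemma requires. Thus Lemma~\ref{Le001} yields \eqref{0PTh1} and \eqref{0PTh2} with $\lambda_{M}=\lambda_{C}=\lambda$. In particular $\mathbf{E}[T_{F}]\le 2\lambda\,\mathbf{E}[T_{I}]$, which is \eqref{0PTh4}, and $c_{\Phi}^{-1}(N,N)=1+o(1)$.

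Next I compute the two exponents in \eqref{0PTh1}.

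The mutation exponent is immediate: $\lambda p^{k}=\alpha\sqrt{n/k}^{\,k}(k/n)^{k/2}=\alpha$.

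For the crossover exponent I write
\[
\lambda c^{k}(1-c)^{np+N\Delta-k}=\alpha\,(1-c)^{np}\,(1-c)^{N\Delta}\,(1-c)^{-k}.
\]
I treat the three factors separately.
\begin{itemize}
\item Since $k$ is fixed and $c\to0$, we have $(1-c)^{-k}=1+o(1)$.
\item Using \eqref{ln1}, $np\ln(1-c)=-cnp-O(c^{2}np)$. Here $cnp=\sqrt{k/n}\cdot\sqrt{kn}=k$ and $c^{2}np=k\sqrt{k/n}\to0$. Hence $(1-c)^{np}=e^{-k}(1+o(1))$; this is also the case $cp=k/n$ of \eqref{m1ain3}, made sharp.
\item By \eqref{m01ain3}, or directly, $N\Delta\,|\ln(1-c)|\sim Nc\sqrt{np}\le N k^{3/4}n^{-1/4}$. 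With $N=(npq)^{1/12}\le (kn)^{1/24}$ this tends to $0$, so $(1-c)^{N\Delta}=1+o(1)$.
\end{itemize}
Therefore the crossover exponent equals $e^{-k}\alpha(1+o(1))$.

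Substituting both exponents into \eqref{0PTh1} gives \eqref{0PTh3}. This proves part~$(I)$; the lower bound $\varliminf\alpha>0$ only ensures that the denominators stay bounded away from zero.

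For part~$(II)$, $\alpha=o(1)$ implies that both exponents tend to $0$. Hence $1-e^{-x(1+o(1))}=x(1+o(1))$ for each factor, and \eqref{0PTh1} becomes $\mathbf{E}[T_{I}]\le e^{k}\alpha^{-2}(1+o(1))$. Multiplying by $2\lambda=2\alpha\sqrt{n/k}^{\,k}$ gives the bound for $\mathbf{E}[T_{F}]$ in \eqref{0PTh03}.

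I expect the only delicate point to be the choice of $N$. It must grow, so that $c_{\Phi}\to1$. At the same time it must be small enough that $N\sqrt{npq}\,c\to0$, which keeps the factor $(1-c)^{N\Delta}$ harmless, and that $N=o((npq)^{1/6})$. The explicit choice $N=(npq)^{1/12}$ meets all three requirements, since $n^{1/24}\cdot n^{-1/4}\to0$.
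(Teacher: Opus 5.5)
Your proposal is correct and follows the paper's proof. Both specialize Lemma~\ref{Le001} with $N_1=N_2=N\to\infty$, $N=o\big((npq)^{1/6}\big)$, compute $\lambda p^k=\alpha$ and $\lambda c^k(1-c)^{np+N\Delta-k}=\alpha e^{-k}(1+o(1))$, and then use $1-e^{-x}\sim x$ for part $(II)$. The only difference is cosmetic: your explicit choice $N=(npq)^{1/12}$ is not needed, because $N\Delta c\to0$ already follows from $N=o\big((npq)^{1/6}\big)$ (the paper simply uses $N\Delta=o(np)$).
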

\begin{proof}
Set $N_{1}=N_{2}=N=o\big((npq)^{1/6}\big)$, $N\to\infty$. 
Then by the estmation \eqref{1} we have
 $\mathbf{P}(l\in\Delta_{n}(N,N))=\mathbf{P}(l\in\Delta_{n}(N))=1+o(1)$.
Taking into account the assumptions of this theorem, 
it is easy to obtain the following relations
\begin{eqnarray*}
\lambda_{M}p^{k}(1+o(1))&=&\alpha (1+o(1)),
\\ 
\lambda_{C}c^{k}(1-c)^{np+N_{2}\Delta-k}&=&\alpha e^{np\ln(1-c)(1+o(1))}\\
&=&\alpha e^{-cnp(1+o(1))}=\alpha e^{-k}(1+o(1)),
\end{eqnarray*}
where $N_{2}\Delta=o(np)$. So,
$$
1-e^{-\lambda_{C}c^{k}(1-c)^{np+N_{2}\Delta-k}}=1-\exp\{-e^{-k}\alpha (1+o(1))\}.
$$

By the latter relations,  inequalities \eqref{0PTh3} and
 \eqref{0PTh4}  follow from bounds \eqref{0PTh1} and \eqref{0PTh2} of Lemma~\ref{Le001}.
 Finally, in case $\alpha  (n)=o(1)$ we have 
\eqref{0PTh03}.
\end{proof}

Computational experiment performed in~\cite{GEC26} 
for the values of parameters $k=2,\dots,6$, $n=2^6,2^7,\dots,2^{10}$ indicates agreement of the 
upper bound from Theorem~\ref{Theo01}, case~$(I),$ 
$\alpha=1,$ with experimental results,  even if $n$ is finite.

\subsection{Asymptotics of $\mathbf{E}[T_{I}]$ and $\mathbf{E}[T_{F}]$ for $k\to\infty$ and $\lambda_{M}=\lambda_{C}=\lambda$}
\label{subsec:infinite_k}

Let us introduce the notation for $\lambda_{M} J_{M,n,k,p,N}$ and $\lambda_{C}J_{C,n,k,p,c,N}$
in the case of $\lambda_{M}=\lambda_{C}=\lambda=\alpha \sqrt{n/k}^{k}$ and $p=c=\sqrt{k/n}$, 
\begin{eqnarray} \label{00m1ain3}
I_{M,n,k,N}&:=&\alpha \sqrt{n/k}^{k}J_{M,n,k,\sqrt{k/n},N},\\\label{01m1ai1n3}
I_{C,n,k,N}&:=&\alpha \sqrt{n/k}^{k}J_{C,n,k,\sqrt{k/n},\sqrt{k/n},N}. 
\end{eqnarray}

In the following lemma, we provide explicit expressions for the sequences defined in relations \eqref{00m1ain3} and \eqref{01m1ai1n3}, followed by their rough estimates of the type $e^{f(y)(1+o(1))}$ and precise estimates of the type $e^{f(y)}(1+o(1))$ under a number of additional conditions.

\begin{lemma} \label{le01fmc}
Suppose that $k\in\mathcal{P}_{k,p}(c_{1})$,  $p\in(0,p_{0})$ with any fixed $c_{1}\in[0,1)$, $N\in\mathbb{N}$, $p_{0}\in(0,1)$.  Let $\lambda_{M}=\lambda_{C}=\lambda=\alpha \sqrt{n/k}^{k}$ and $c=c(k,n)=p=p(k,n)=\sqrt{k/n}$, $k\to\infty$, then in the case  of $\varlimsup\limits_{n\to\infty}p<1$, the following 
represetations hold 
\begin{eqnarray}  \label{M1es1}
I_{M,n,k,N}=\alpha\sqrt{1+p}\exp\left \{ -N\sqrt{kpq}-\dfrac{N^{2}pq}{2}-\dfrac{kp(q+Np \sqrt{pq/k})}{2(q-N\sqrt{qp/k})}\right \},\\ \label{M1es2}
I_{C,n,k,N}=\alpha \exp\left\{\ln(1-p)\big((1-p)k/p+N\sqrt{(1-p)k/p}\big)\right \}.\qquad\qquad
\end{eqnarray}

If $k^{2}/n=\tilde{c}+o(1)$ for any fixed constant $\tilde{c}\in[0,\infty)$, 
then we have the following estimate  
\begin{eqnarray} \label{M1est200}
I_{M,n,k,N}I_{C,n,k,N}=\alpha^{2} \exp\left \{-k+\tilde{c}/6-2N\sqrt{kp} \right\}(1+o(1)).
\end{eqnarray}

In the case of $p^{2}=k/n=o(1)$ the following rough estimate is true  
\begin{equation}  \label{Mest2a} 
I_{M,n,k,N}I_{C,n,k,N}\geq \alpha ^{2}e^{-k(1+o(1))}.
\end{equation}
\end{lemma}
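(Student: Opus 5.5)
The plan is direct substitution of $p=c=\sqrt{k/n}$ and $\lambda=\alpha\sqrt{n/k}^{k}$ into Definitions~\ref{def:J_MnkpN} and~\ref{def:J_CnkpcN}, followed by Taylor expansion of the exponents using Lemma~\ref{le1}. The basic identities are
\[
np=\sqrt{kn}=k/p,\qquad k/n=p^{2},\qquad npq=kq/p,\qquad \sqrt{npq}=\sqrt{kq/p},\qquad 1/\sqrt{n}=p/\sqrt{k}.
\]
In both $I_{M,n,k,N}$ and $I_{C,n,k,N}$ the factor $\sqrt{n/k}^{k}$ cancels exactly against $p^{k}=c^{k}$.

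For \eqref{M1es1}, the prefactor in \eqref{cole01} becomes
\[
\sqrt{\frac{1-p^{2}}{1-p}}=\sqrt{1+p}.
\]
I will then rewrite the three exponent terms one at a time:
\begin{itemize}
\item $kN\sqrt q/\sqrt{np}=kN\sqrt{q}\sqrt{p/k}=N\sqrt{kpq}$;
\item $kN^{2}q/(2np)=N^{2}pq/2$;
\item in the last fraction, $k^{2}/(np)=kp$, the numerator is $q+Np\sqrt{pq/k}$, and the denominator is $1-N\sqrt{qp/k}-p=q-N\sqrt{qp/k}$.
\end{itemize}
For \eqref{M1es2}, I will use $np-k=kq/p$ and $N\sqrt{npq}=N\sqrt{kq/p}$, so that
\[
(1-c)^{np+N\sqrt{npq}-k}=\exp\{\ln(1-p)(kq/p+N\sqrt{kq/p})\}.
\]
Both representations are exact, so no asymptotics enter at this stage.

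For \eqref{M1est200}, note that $k^{2}/n\to\tilde c$ implies $p^{2}=k/n=(k^{2}/n)/k\to0$. It also gives $kp^{2}\to\tilde c$ and $kp^{3}\to0$, while $kp$ itself may be unbounded, so terms of order $kp$ must cancel exactly.

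I will expand the crossover exponent by Lemma~\ref{le1} with $x=p$ and $m=k/p$:
\[
\frac{kq}{p}\ln(1-p)=\frac{k}{p}\Big(-p+\frac{p^{2}}{2}+\frac{p^{3}}{6}+\epsilon_{3}(p)\Big)=-k+\frac{kp}{2}+\frac{kp^{2}}{6}+O(kp^{3}).
\]
This equals $-k+kp/2+\tilde c/6+o(1)$. The second crossover term is
\[
N\sqrt{kq/p}\,\ln(1-p)=-N\sqrt{kpq}\,(1+O(p)),
\]
and the error $N\sqrt{kp^{3}}$ tends to $0$.

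In the mutation exponent I will write the ratio as
\[
\frac{q+Np\sqrt{pq/k}}{q-N\sqrt{qp/k}}=1+O\big(N\sqrt{p/k}\big).
\]
This gives $-kp/2+O(N\sqrt{kp^{3}})=-kp/2+o(1)$. Also $N^{2}pq/2\to0$ and $\sqrt{1+p}\to1$.

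Adding everything up, the $\pm kp/2$ terms cancel. The two terms $-N\sqrt{kpq}$ combine into $-2N\sqrt{kpq}$, and this equals $-2N\sqrt{kp}+o(1)$ because $\sqrt q=1+O(p)$ and $N\sqrt{kp}\,p\to0$. This yields $-k+\tilde c/6-2N\sqrt{kp}+o(1)$, which is \eqref{M1est200}.

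For the rough bound \eqref{Mest2a}, it suffices to assume only $p\to0$. Then $(kq/p)\ln(1-p)=-k(1+o(1))$. Every remaining term in the exponents is $o(k)$:
\[
N\sqrt{kp}=o(k),\qquad kp=o(k),\qquad N^{2}p=o(1),
\]
with $N$ fixed. The prefactor satisfies $\sqrt{1+p}\ge1$.

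The main obstacle is the careful bookkeeping in the \eqref{M1est200} step. One must retain the $p^{3}$ term of the logarithm expansion to capture $\tilde c/6$. One must also check that the $kp/2$ contributions from mutation and crossover cancel exactly, not merely to leading order, since $kp$ may diverge. Every cross term containing $N$ must be verified to be $o(1)$ using $kp^{3}\to0$.
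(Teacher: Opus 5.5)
Your proposal is correct and takes essentially the same route as the paper. Both substitute $p=c=\sqrt{k/n}$ and $\lambda=\alpha\sqrt{n/k}^{k}$ exactly to get \eqref{M1es1}--\eqref{M1es2}, then Taylor-expand the exponents so that the $\pm kp/2$ terms cancel and $kp^{2}\to\tilde c$ yields the $\tilde c/6$, and finish \eqref{Mest2a} with $o(k)$ bookkeeping. Your explicit bound of every cross term by $O(N\sqrt{kp^{3}})=o(1)$ is, if anything, a bit cleaner than the paper's $o(kp^{2})+o(1)$ remainders.
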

\begin{proof}
By the definitions of $I_{B,n,k,N}$ and $I_{C,n,k,N}$, substituting $\lambda_{M}=\lambda_{C}=\lambda=\sqrt{n/k}^{k}$ and   $p=c=\sqrt{k/n}$ and taking into account the identities $np=\sqrt{kn}$ and  
\begin{equation} \label{est2}
\sqrt{\dfrac{1-k/n}{1-p}}=\sqrt{\dfrac{1-k/n}{1-\sqrt{k/n}}}=\sqrt{1+\sqrt{k/n}},
\end{equation}
we obtain the equalities \eqref{M1es1} and  \eqref{M1es2}.

If $\sqrt{k/n}=p=o(1)$, then by the Taylor formula $\ln(1-p)=-p-0.5p^{2}-p^{3}(1+o(1))/3$ and $\sqrt{q} = \sqrt{1-p}=1-0.5p(1+o(1))$.

Note that for $p\to0$
\begin{equation*}
 \sqrt{kp}p=\left \{ \begin{array}{ll}
 o(1),&kp=O(1),\\
o(kp^{2}),& kp\to\infty
 \end{array},
 \right .
\end{equation*}

Using Taylor's formula for $k^{2}/n=\tilde{c}+o(1)$, $\tilde{c}\in[0,\infty)$ independent of $n$, we obtain estimates
\begin{eqnarray*}
W_{1}&:=& -N\sqrt{kpq}-\dfrac{N^{2}pq}{2}-\dfrac{kp(q+Np \sqrt{q/k})}{2(q-N\sqrt{qp/k})}\\
&=&-N\sqrt{kp}-\dfrac{kp(1-p+Np\sqrt{p/k})}{2(1-p-N\sqrt{p/k})}+o(kp^{2})+o(1)\\
&=&-N\sqrt{kp}-0.5kp+o(kp^{2})+o(1)
\end{eqnarray*}
and
\begin{eqnarray*}
&&\hspace{-0.6cm}W_{2}:=\ln(1-p)\big(pqn+
N\sqrt{pqn}\big)
=-\big(p+0.5p^{2}+p^{3}(1+o(1))/3\big)\\
&\cdot&(1-p)k/p-
\big(p+0.5p^{2}(1+o(1))\big)N\sqrt{(1-p)k/p}\big)\\ 
&=&-k\big(1-0.5p-p^{2}/6\big)-N\sqrt{kp}+o(kp^{2})+o(1).
\end{eqnarray*}

Therefore, the relations \eqref{M1es1} and \eqref{M1es2} for $k^{2}/n=\tilde{c}+o(1)$ take the form 
\begin{eqnarray} \label{M1est2}
I_{M,n,k,N}&=&\alpha e^{-N\sqrt{kp}-0.5kp}(1+o(1)),\\\label{M1est2b}
I_{C,n,k,N}&=&\alpha e^{-k\big(1-0.5p-p^{2}/6\big)-N\sqrt{kp}}(1+o(1))
\end{eqnarray}
and imply \eqref{M1est200}.

We will not present the explicit formula for the product $hj$ that follows from relations \eqref{M1est2} and \eqref{M1est2b} instead, we will focus on the special case $k^{2}/n=O(1)$.

In the case $p=o(1)$, the relations \eqref{M1es1} and \eqref{M1es2} take on the following form
\begin{eqnarray*} 
I_{M,n,k,N}&=&\alpha e^{-0.5kp(1+o(1))},\\
I_{C,n,k,N}&=&\alpha e^{-k +kO(p+\sqrt{p/k})}.
\end{eqnarray*}

So by definitions with the help of the last two estimates we obtian~\eqref{Mest2a}.
\end{proof}

Let us present a special case of Theorem \ref{Theo1} where $p=c=\sqrt{k/n}$ and $\lambda_{M}=\lambda_{C}=\lambda=\alpha \sqrt{n/k}^{k}$, 
based on the results of Lemma~\ref{le01fmc}. We are focusing on the case $I_{M,n,k,N}=o(1)$ and $I_{C,n,k,N}=o(1)$, and if 
at least one of these  conditions is violated, then the average estimates are obtained using Lemma~\ref{le10}.

\begin{theorem} \label{Theo2}
Suppose that $k\in\mathcal{P}_{k,p}(c_{1})$,  $p\in(0,p_{0})$ with any fixed $c_{1}\in[0,1)$, $N\in\mathbb{N}$, $p_{0}\in(0,1)$.  Let $c=p=\sqrt{k/n}$, $\lambda_{M}=\lambda_{C}=\lambda=\alpha \sqrt{n/k}^{k}$, $k\to\infty$, then, in the case $I_{M,n,k,N}=o(1)$ and $I_{C,n,k,N}=o(1)$, the estimates hold
\begin{equation*}%\label{DT2} 
\mathbf{E}[T_{I}] 
\leq c_{\Phi}^{-1}(N)I_{M,n,k,N}^{-1}I_{C,n,k,N}^{-1}(1+o(1)),\ \ \mathbf{E}[T_{F}]\leq 2\alpha \sqrt{n/k}^{k}\mathbf{E}[T_{I}],
\end{equation*}
where for $I_{M,n,k,N}$ and $I_{C,n,k,N}$, we need to apply bounds \eqref{M1es1} -- \eqref{Mest2a} from Lemma \ref{le01fmc}.

In the case  $k/n=o(1)$ we have
\begin{equation}\label{DT02} 
\mathbf{E}[T_{I}] 
\leq \alpha ^{-2}e^{k(1+o(1))},\ \ \mathbf{E}[T_{F}]\leq 2\alpha \sqrt{n/k}^{k}\mathbf{E}[T_{I}].
\end{equation}

If we replace condition $I_{M,n,k,N}=o(1)$ with condition \eqref{010e}, 
we get the bounds 
\begin{equation*}%\label{DT2} 
\mathbf{E}[T_{I}] 
\leq \frac{(1+o(1))}{c_{\Phi}(N)I_{C,n,k,N}\left(1-e^{-c_{M}^{(0)}}\right )},\ \ \mathbf{E}[T_{F}]\leq 2\alpha \sqrt{n/k}^{k}\mathbf{E}[T_{I}],
\end{equation*}
where $I_{C,n,k,N}$ is estimated in Lemma \ref{le01fmc}.
\end{theorem}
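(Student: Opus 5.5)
Theorem~\ref{Theo2} is a specialization of Theorem~\ref{Theo1}. The plan is to check that its hypotheses hold under the substitution $c=p=\sqrt{k/n}$, $\lambda_M=\lambda_C=\lambda=\alpha\sqrt{n/k}^{k}$, and then rewrite the resulting bounds through the quantities $I_{M,n,k,N}$ and $I_{C,n,k,N}$ of \eqref{00m1ain3}--\eqref{01m1ai1n3}.

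With this substitution, $\lambda_M J_{M,n,k,p,N}=I_{M,n,k,N}$ and $\lambda_C J_{C,n,k,p,c,N}=I_{C,n,k,N}$ by definition. The assumption $I_{C,n,k,N}=o(1)$ is therefore exactly condition \eqref{010c}.

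Condition \eqref{010} needs a short argument, since $J_{M,n,k,p,N}$ has extra factors compared with $p^k e^{-k^2q/(2np)}$. The exponent in Definition~\ref{def:J_MnkpN} is $\frac{k^2q}{2np}(1+o(1))$ plus terms of order $kN\sqrt{q}/\sqrt{np}$ and $kN^2q/(np)$. Here $np=\sqrt{kn}$, so $kN/\sqrt{np}=N\sqrt{kp}$, while $k^2/(np)=kp$. Two cases arise.
\begin{itemize}
\item If $kp\to\infty$, the extra terms are $o(k^2q/(np))$, and \eqref{010} follows from $I_{M,n,k,N}=o(1)$ together with \eqref{e01}.
\item If $kp=O(1)$, all extra terms are $O(1)$, so $\lambda p^k e^{-k^2q/(2np)}$ and $I_{M,n,k,N}$ differ by a bounded factor. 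This is still $o(1)$, and the $(1+o(1))$ in \eqref{010} can absorb the rest.
\end{itemize}
Rather than forcing \eqref{010} literally, I would go directly through the proof of Lemma~\ref{le01}. There, \eqref{1main3} only requires $\lambda_M\overline{\pi}=o(1)$, and the upper bound \eqref{as5} on $\overline{\pi}$ differs from the lower bound $J_{M,n,k,p,N}$ only by the factor $e^{O(N\sqrt{kp})}$. I expect this to be the main obstacle: checking that $\lambda_M\overline{\pi}=o(1)$ follows from $I_{M,n,k,N}=o(1)$. If the factor $e^{2N\sqrt{kp}}$ is not harmless in general, I would keep \eqref{010} as an implicit additional assumption, as the statement of Theorem~\ref{Theo1} does.

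With the hypotheses in place, Theorem~\ref{Theo1}(I) gives $\mathbf{E}[T_I]\le (1+o(1))/(c_\Phi(N) I_{M,n,k,N}I_{C,n,k,N})$. Since $\lambda_M+\lambda_C=2\lambda$, it also gives $\mathbf{E}[T_F]\le 2\lambda\,\mathbf{E}[T_I]$, which is the first claim.

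For \eqref{DT02}, assume $k/n=o(1)$, i.e.\ $p=o(1)$. The rough estimate \eqref{Mest2a} of Lemma~\ref{le01fmc} gives $I_{M,n,k,N}I_{C,n,k,N}\ge\alpha^2e^{-k(1+o(1))}$. We have $c_\Phi(N)^{-1}=O(1)$, and since $k\to\infty$ this constant is $e^{o(k)}$, so it is absorbed into $e^{k(1+o(1))}$. The bound on $\mathbf{E}[T_F]$ is unchanged.

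For the last claim, replace $I_{M,n,k,N}=o(1)$ by \eqref{010e}. By \eqref{1main30}, $p_M^{(\lambda)}(n,l,k,p)\ge(1-e^{-c_M^{(0)}})(1+o(1))$ uniformly for $l\in\Delta_n(N)$. Lemma~\ref{le01fc} gives $p_C^{(\lambda)}(n,l,k,p,c)\ge I_{C,n,k,N}(1+o(1))$ uniformly on the same set. Inserting both into \eqref{TI11} with $C_B=c_\Phi(N)(1+o(1))$, taken from \eqref{1}, yields $\mathbf{E}[T_I]\le(1+o(1))/\bigl(c_\Phi(N)I_{C,n,k,N}(1-e^{-c_M^{(0)}})\bigr)$. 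This is also Theorem~\ref{Theo1}(II), \eqref{PTh3}, after the substitution.
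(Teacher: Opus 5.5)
Your route is the paper's. Its proof is just ``substitute Lemma~\ref{le01fmc} into Theorem~\ref{Theo1}, using \eqref{111}, and absorb $c_{\Phi}^{-1}(N)$ into $e^{ko(1)}$''. Your treatment of \eqref{DT02}, of the $\mathbf{E}[T_F]$ bounds, and of the last claim (via \eqref{1main30}, Lemma~\ref{le01fc} and \eqref{TI11}, i.e.\ Theorem~\ref{Theo1}(II)) matches it.

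The step that does not go through as written is obtaining \eqref{010}, or $\lambda\overline{\pi}=o(1)$, from $I_{M,n,k,N}=o(1)$. Your first bullet misreads the exponent of $J_{M,n,k,p,N}$. Its third term is $\frac{k^{2}q}{2np(1-k/(np))}(1+o(1))$, and here $np=\sqrt{kn}$ gives $k/(np)=p$. So this term is $\frac{kp}{2}(1+o(1))$, not $\frac{kpq}{2}(1+o(1))$. The difference $kp^{2}/2$ is not $o(kp)$ unless $p\to0$, and the first claim does not assume that (only $p<p_0$; the case $k/n=o(1)$ is handled separately in \eqref{DT02}).

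A concrete failure: take $p\equiv 1/2$ (so $n=4k$, $np=2k$) and $\alpha=e^{k/4}$. Then $I_{M,n,k,N}=e^{-\Theta(\sqrt{k})}\to0$, while $\lambda p^{k}e^{-k^{2}q/(2np)}=e^{k/8}\to\infty$. In fact $\lambda p_{M}^{(1)}(n,2k,k,p)=\sqrt{3/2}\,\bigl(e^{1/4}\cdot 27/32\bigr)^{k}(1+o(1))\to\infty$.

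Even when $p\to0$ and $kp\to\infty$, the factor $e^{2N\sqrt{kp}}$ you worried about is not harmless. With $\alpha=e^{kp/2}$ one gets $I_{M,n,k,N}=e^{-N\sqrt{kp}(1+o(1))}\to0$, but $\lambda\overline{\pi}\ge e^{N\sqrt{kp}(1+o(1))}\to\infty$. So the obstacle you flagged is real. Your fallback of keeping \eqref{010} as an extra hypothesis would prove a weaker statement than the theorem; your second bullet ($kp=O(1)$) is fine.

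The missing observation is that $\overline{\pi}$ is irrelevant for a lower bound. By \eqref{e1} and monotonicity of $x\mapsto 1-e^{-x}$, for every $l\in\Delta_{n}(N)$,
\[
p_{M}^{(\lambda)}(n,l,k,p)\ \ge\ 1-e^{-\lambda p_{M}^{(1)}(n,l,k,p)}\ \ge\ 1-e^{-\lambda\underline{\pi}}\ \ge\ 1-e^{-I_{M,n,k,N}(1+o(1))}=I_{M,n,k,N}(1+o(1)).
\]
Here $\underline{\pi}\ge J_{M,n,k,p,N}(1+o(1))$ holds with no condition on $\lambda$. It follows from \eqref{00as3} together with \eqref{main03}, $R_{n,l,k}\to1$, and $\sqrt{1-k/l}=\sqrt{1-k/(np)}\,(1+o(1))$ uniformly on $\Delta_{n}(N)$. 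The final equality uses only $I_{M,n,k,N}\to0$.

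Combined with Lemma~\ref{le01fc} and \eqref{TI11}, this yields the first claim under exactly the stated hypotheses, and the rest of your argument then stands. This is also the only way the paper's appeal to \eqref{111} is justified: it must be applied to the lower bound $\underline{\pi}$, not to $p_{M}^{(1)}(n,l,k,p)$ itself. The same monotonicity argument also covers the crossover side. There, the proof of Lemma~\ref{le01fc} tacitly uses $\lambda c^{k}(1-c)^{l-k}=o(1)$ on all of $\Delta_{n}(N)$, not only at its right endpoint.
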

\begin{proof} 
All statements of the theorem are obtained taking into account the relations \eqref{111} when substituting expressions from Lemma \ref{le01fmc} into statements from Theorem \ref{Theo1}.
 In estimates \eqref{DT02}, the constant $c_{\Phi}^{-1}(N)$ is absent because it is absorbed by the expression $e^{ko(1)}$.
\end{proof} 
Theorem \ref{Theo2} generalizes Corollary~10 from~\cite{ADK22}:\\

\noindent\textbf{Corollary 10 \cite{ADK22}.} \textit{Let $k\in[2,\cdots,\lfloor n/4\rfloor]$. Assume that $p=c=\sqrt{k/n}$ and $\lambda_{M}= \lambda_{C} =\lambda \leq 2^{k}\sqrt{n/k}^{k}$. Then the expected runtime of the $(1+(\lambda,\lambda))$~GA on $\jump_k$, is $\mathbf{E}[T_{F}]\leq n^{k}k^{-k}e^{O(k)}\lambda^{-1}$ fitness evaluations and $\mathbf{E}[T_{I}] \leq n^{k}k^{-k}e^{O(k)}\lambda^{-2}$ iterations, if it starts in a local optimum of $\jump_k$. For $\lambda=\sqrt{n/k}^{k}$ these bounds are $\mathbf{E}[T_{I}] \leq e^{O(k)}$ and $\mathbf{E}[T_{F}]\leq \sqrt{n/k}^{k}e^{O(k)}$.
}

\medskip
The bound $\mathbf{E}[T_{I}] \le \alpha^{-2}e^{k(1+o(1))}$ from Theorem \ref{Theo2} in the case considered in~\eqref{DT02} 
improves Corollary~9 \cite{ADK22}, where the upper bound $\mathbf{E}[T_{I}] =e^{O(k)}$ was obtained. 
We describe here the case of $\lambda \sim \alpha \sqrt{n/k}^{k}$ for $\alpha $ that are not bounded above, 
whereas in \cite{ADK22} they are bounded by $2^{n}$.

We note that our theorems give an explicit form of upper bounds for the averages, which essentially depend on the main parameters depending on $n$. In the work \cite{ADK22}, similar estimates are given without dependence on the main parameters and with accuracy to unknown bounded constants. 
%Our estimates are more precise than the ones in \cite{ADK22}, and their quality can be naturally 
%checked in computational experiments. The general approach without an explicit form of the asymptotics of the constants does not allow to estimate the quality of heterogeneous estimates. Of course, the transition to a logarithmic scale removes almost all problems, but this will be a very rough estimate of the quality of the estimates.

In what follows, we derive a corollary of Theorem \ref{Theo1}, which is analogous to Corollary 10 in \cite{ADK22}. Our results are obtained under less restrictive conditions, and the constants in the estimates are explicitly given. We exclude the case of fixed (or uniformly bounded) $k$. In this case, the statement of Corollary~10~\cite{ADK22} is of the form $\mathbf{E}[T_{F}]=o(n^{k})$. More general result in this case can be easily obtained using a natural modification of Theorem \ref{Theo01}. It is not difficult with the help of Lemma \ref{le10} to investigate the cases $\varlimsup\limits_{n\to\infty}\lambda_{M}J_{M,n,k,p,N}>0$ or $\varlimsup\limits_{n\to\infty}\lambda_{C}J_{C,n,k,p,c,N}>0$, but we omit them, to simplify the proof. To simplify the formulation of the corollary and its proof, we also exclude the case $\varlimsup\limits_{n\to\infty}c(n)>0$.

\begin{corollary} \label{ct1} 
Let conditions of part~$(I)$ of Theorem \ref{Theo1} hold,  $k=k(n)$,
% \in\mathbb{N}$, 
$k\to\infty$ and $k=o(n)$. 
Assume that $p=p(n)=\nu(n)k/n=\nu k/n=o(1)$, where ${\nu(n)\to\infty}$, $c=c(n)=\rho(n)k/n=\rho k/n=o(1)$, where $\rho(n)\to\infty$, and $pc =\sigma(n)k/n=\sigma k/n$, where $\sigma(n)=O(1)$. In addition, $\lambda_{M}p^{k}=\tau_{1}(k/(nc))^{k}$, $\lambda_{C}c^{k}=\tau_{2}(k/(np))^{k}$, where $\tau_{i}=\tau_{i}(n)\to\infty$, $i=1,2$.

Then the $(1 + (\lambda,\lambda))$ GA starting from a local optimum of $\jump_k$ has the expected runtime in terms of iterations
\begin{equation} \label{er1} 
\mathbf{E}[T_{I}]\leq \dfrac{(n\sigma/k)^k}{\tau_{1}\tau_{2}}\exp\left \{\dfrac{k(1+o(1))}{2\nu}+k\sigma(1+o(1))\right\}(1+o(1)),
\end{equation}
and the expected runtime in terms of fitness evaluations 
\begin{equation} \label{er3}
\mathbf{E}[T_{F}]\leq \dfrac{\tau_{1}+\tau_{2}}{\sigma^{k}}\mathbf{E}[T_{I}].
\end{equation}
\end{corollary}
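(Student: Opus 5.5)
The plan is to substitute the corollary's parametrization into part~$(I)$ of Theorem~\ref{Theo1}:
\[
\mathbf{E}[T_{I}]\leq \bigl(c_{\Phi}(N)\,\lambda_{M}\lambda_{C}\,J_{M,n,k,p,N}\,J_{C,n,k,p,c,N}\bigr)^{-1}(1+o(1)),
\]
and simplify each factor asymptotically. Since $p=\nu k/n$ with $\nu\to\infty$, we have $k/(np)=1/\nu\to0$, so $k\in\mathcal{P}_{k,p}(0)$. Also $np=\nu k\to\infty$ and $k=o(np)$. We let $N=N(n)\to\infty$ slowly, say with $N=o(\sqrt{k/\nu})$ and $N=o((npq)^{1/6})$. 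Then $c_{\Phi}(N)\to1$ and this factor disappears into $1+o(1)$, as in the proof of Theorem~\ref{Theo2}. In Theorem~\ref{Theo1} $N$ is a fixed natural number, so strictly one either fixes $N$ and then lets $N\to\infty$ along a diagonal sequence, or rechecks that Lemma~\ref{le01} tolerates slowly growing $N$. I will take the latter route and remark on it.

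The mutation factor comes from Definition~\ref{def:J_MnkpN}. The square-root prefactor tends to $1$. In the exponent, $kN\sqrt{q}/\sqrt{np}=N\sqrt{k/\nu}$ and $kN^{2}q/(2np)=N^{2}/(2\nu)$. The last term equals $k^{2}q/(2np)\,(1+o(1))=k(1+o(1))/(2\nu)$. Hence
\[
\lambda_{M}J_{M,n,k,p,N}=\lambda_{M}p^{k}\exp\{-k(1+o(1))/(2\nu)-N\sqrt{k/\nu}-N^{2}/(2\nu)\}.
\]
Because $N=o(\sqrt{k/\nu})$, both $N\sqrt{k/\nu}$ and $N^{2}/(2\nu)$ are $o(k)$, but they are not obviously $o(k/\nu)$. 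This is the delicate step: the correction terms containing $N$ must be absorbed into the displayed exponent. The first thing I would try is to choose $N$ growing slowly enough, e.g.\ $N^{2}=o(k/\nu)$, so that $N\sqrt{k/\nu}=o(k/\nu)$, and then rewrite $o(k/\nu)$ as part of $k(1+o(1))/(2\nu)$. If $k/\nu$ stays bounded, I would instead take $N$ fixed but large and let the $e^{O(1)}$ discrepancy be swallowed by the $k\sigma(1+o(1))$ term, using $\sigma$ bounded away from $0$. Failing that, I would cite the remark after Theorem~\ref{Theo1} that restricting to $l\ge np$ removes the $N$-terms at the cost of $c_{\Phi}\to c_{\Phi}/2$.

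The crossover factor comes from Definition~\ref{def:J_CnkpcN}: $\lambda_{C}J_{C,n,k,p,c,N}=\lambda_{C}c^{k}(1-c)^{np+N\sqrt{npq}-k}$. Since $c=o(1)$, $\ln(1-c)=-c(1+o(1))$. Together with $cnp=\sigma k$, $ck=o(k)$ and $cN\sqrt{np}=o(cnp)$, this gives $(1-c)^{np+N\sqrt{npq}-k}=\exp\{-k\sigma(1+o(1))\}$. Again this step needs $\sigma$ not tending to $0$, or else a small additive $o(k)$ term. Multiplying the two factors and using $\lambda_{M}p^{k}\lambda_{C}c^{k}=\tau_{1}\tau_{2}(k/(nc))^{k}(k/(np))^{k}=\tau_{1}\tau_{2}(k/(n\sigma))^{k}$ yields \eqref{er1}.

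Conditions \eqref{010} and \eqref{010c} are assumed through part~$(I)$ of Theorem~\ref{Theo1}, so I only need to note that they are consistent with the above. For \eqref{er3}, the bound $\mathbf{E}[T_{F}]\le(\lambda_{M}+\lambda_{C})\mathbf{E}[T_{I}]$ from \eqref{TI0} is used. The definitions give $\lambda_{M}=\tau_{1}(k/(ncp))^{k}=\tau_{1}\sigma^{-k}$ and likewise $\lambda_{C}=\tau_{2}\sigma^{-k}$, which produces the factor $(\tau_{1}+\tau_{2})/\sigma^{k}$.
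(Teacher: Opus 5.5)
Your route is the paper's: substitute into part~$(I)$ of Theorem~\ref{Theo1}, show $J_{M,n,k,p,N}=p^{k}e^{-k(1+o(1))/(2\nu)}$ and $J_{C,n,k,p,c,N}=c^{k}e^{-k\sigma(1+o(1))}$, and use $\lambda_{M}=\tau_{1}\sigma^{-k}$, $\lambda_{C}=\tau_{2}\sigma^{-k}$ with \eqref{TI0}. The paper simply asserts \eqref{c1} and drops $c_{\Phi}(N)$ without comment, so you are right that the $N$-dependent terms are where the real work is.

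Two small corrections first. Your crossover estimate does not need $\sigma$ bounded away from $0$. All errors there are relative to $\sigma k=cnp$, because $ck/(cnp)=1/\nu\to0$ and $cN\sqrt{npq}/(cnp)=N\sqrt{q/(np)}\to0$. Also, in your fallback for bounded $k/\nu$, what you actually use is $k\sigma\to\infty$, not $\sigma$ bounded away from $0$.

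The genuine gap is on the mutation side. With $np=\nu k$, the factor that Theorem~\ref{Theo1} leaves beyond \eqref{er1} is essentially $c_{\Phi}(N)^{-1}\exp\{N\sqrt{qk/\nu}+N^{2}q/(2\nu)\}$. This is harmless in three cases:
\begin{itemize}
\item when $k/\nu\to\infty$, by your choice $N\to\infty$ with $N^{2}=o(k/\nu)$ (a fixed $N$ would also do);
\item when $k/\nu\to0$, by the mirror choice $N\to\infty$ with $N^{2}=o(\nu/k)$, which you should add;
\item when $k\sigma\to\infty$, where a fixed $N$ suffices.
\end{itemize}
However, the hypotheses allow $k/\nu$ to stay in a compact subset of $(0,\infty)$ while $k\sigma=O(1)$. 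For example, take $k=n^{1/4}$, $\nu=k$, $\rho=n^{1/4}$; then $p=c=n^{-1/2}$ and $k/\nu=k\sigma=1$, and suitable $\tau_{i}$ satisfy \eqref{010} and \eqref{010c}. In that regime the only slack in \eqref{er1} is $e^{o(k/\nu)+o(k\sigma)}(1+o(1))=1+o(1)$. Yet for every $N>0$ the leftover factor is at least $\inf_{N>0}e^{Na}/(2\Phi(N)-1)>1$, with $a$ bounded away from $0$. Restricting to $l\ge np$ does not help either, since it costs a factor of at least $2$.

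So in this regime \eqref{er1} cannot be obtained from Theorem~\ref{Theo1} at all, and none of your fallbacks covers it. The paper's \eqref{c1} is in fact false there for fixed $N$, because the exponent carries an extra $-N\sqrt{k/\nu}$ of order one; the paper's proof has the same hole as yours.

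To close it you have to leave the interval $\Delta_{n}(N)$ and average over $\ell$ exactly. One way is the factorial-moment identity $\mathbf{E}[P_{n,k}(\ell,k)(1-c)^{\ell-k}]=p^{k}(1-pc)^{n-k}$, which follows from the generating-function argument of Proposition~\ref{le000}. It gives the main term $\lambda_{M}\lambda_{C}p^{k}c^{k}e^{-\sigma k(1+o(1))}$ with no $N$-loss, and even without the factor $e^{k/(2\nu)}$. You would still have to show that linearizing $1-(1-x)^{L}\approx Lx$ inside the expectation costs only a factor $1+o(1)$.
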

\begin{proof} 
Conditions \eqref{010} and \eqref{010c} are analogous to the inequalities $\alpha \leq 1$ and $\beta\leq 1$ from Corollary 10  \cite{ADK22}.
From the representations \eqref{cole01} and \eqref{010c}, the estimates 
\begin{eqnarray} \label{c1} 
J_{M,n,k,p,N}&=&
p^{k}\exp\left \{\dfrac{-k(1+o(1))}{2\nu}\right \},
\\  \label{c2}
J_{C,n,k,p,c,N}&=&c^{k}e^{-k\sigma(1+o(1))}.
\end{eqnarray}
are valid under the conditions of the corollary.
When deriving the latest estimates, we used the following relationships: $k/(np)=1/\nu=o(1)$, $q=1+o(1)$ and $c(np+N\sqrt{npq}-k)=cnp(1+o(1))=k\sigma(1+o(1))$.

By definitions $\lambda_{M}=\tau_{1}/\sigma^{k}$ and $\lambda_{C}=\tau_{2}/\sigma^{k}$.

Therefore, under the representations \eqref{c1} and \eqref{c2}, in view of Theorem~\ref{Theo1}, the estimate \eqref{er1} holds.
To complete the proof, it suffices to apply the representations \eqref{TI0}.
\end{proof} 

Corollary \ref{ct1} generalizes the following result.\\

\noindent\textbf{Corollary 10 \cite{ADK22}.} \textit{Let $k\geq2$ and $k=o(n)$. Assume that $p=\omega(k/n)$, $c=\omega(k/n)$ and $pc=O(k/n)$. 
Define $\alpha:=\lambda_{M}(p/2)^{k}$ and $\beta:=\lambda_{C}c^{k}(1-c)^{2pn-k}$.
If $\alpha$ and $\beta$ are at most one and $\alpha=\omega((k/(nc))^{k})$ and $\beta=\omega((2k/(np))^{k})$, then the expected number
of fitness evaluations until the $(1+(\lambda,\lambda))$~GA reaches the global optimum
starting from a local optimum of $\jump_k$ is}
$$
\mathbf{E}[T_{F}]=o\big( n^{k}k^{-k}\big)e^{O(k)}.
$$ 

\medskip
The main difference in the statement of Corollary \ref{ct1} and Corollary 10 \cite{ADK22} is the replacement of the expressions comparing two sequences of the type $u(n)=o(w(n))$ and $u(n)= O(w(n))$ by the functions $z(n)=u(n)/w(n)$ with the corresponding restrictions on $z(n)$. This allows us to obtain explicit estimates in terms of $z(n)$ instead of estimates in terms of $O(\cdot)$.

In~\cite{ADK22}, after the formulation of the statement of Corollary~10, it was explained that the conditions of the latter imply the constraints $p=o(1)$ and $c=o(1)$, which we include in the conditions of Corollary \ref{ct1}.

Here is a consequence of Theorem~\ref{Theo1}, which is analogous to Theorem~11 from~\cite{ADK22}. We also exclude the simple case of $\varlimsup\limits_{n\to\infty}k(n)<\infty$, which may be studied by other standard methods.

\begin{theorem} \label{Theo5}
Let $k\in\mathbb{N}$, $k\to\infty$ and $k\leq n/4$. Assume that $p=\lambda/n$, $c=1/\lambda$
and $\lambda_{M}=\lambda_{C}=\lambda$
for some $\lambda\in\mathbb{N}$, $\delta k\leq \lambda  \leq n$ for some fixed $\delta>1$.

Then
\begin{eqnarray}\label{ee} % \nonumber
\mathbf{E}[T_{I}]\leq \frac{n^{k}}{\lambda^{2}}  \left\{
\begin{array}{ll}
\dfrac{e(1+o(1))}{c_{\Phi}(N)} , &\text{if\ }  \frac{k}{\sqrt{\lambda}}=o(1),\\
\sqrt{\frac{1-k/\lambda}{1-k/n}} \exp\left \{\frac{k^{2}q(1+o(1))}{2\lambda(1-k/\lambda)}\right \}, &\text{if\ } \frac{k}{\sqrt{\lambda}}\to\infty,\\
\dfrac{e^{c_{0}^{*}N\sqrt{q}+0.5{c_{0}^{*}}^{2} q+1}}{c_{\Phi}(N)}(1+o(1)), &\text{if\ } \frac{k}{\sqrt{\lambda}}{=}c_{0}^{*}+o(1),
\end{array}
\right .
\end{eqnarray}
\end{theorem}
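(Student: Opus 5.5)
The plan is to specialise part $(I)$ of Theorem~\ref{Theo1} (equivalently, inequality \eqref{mai1n3} of Lemma~\ref{le01f} together with \eqref{TI0}) to $p=\lambda/n$, $c=1/\lambda$, $\lambda_M=\lambda_C=\lambda$. Then I would evaluate the product $\lambda^2 J_{M,n,k,p,N}J_{C,n,k,p,c,N}$ asymptotically in each of the three regimes for $k/\sqrt{\lambda}$.

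\textbf{Step 1: hypotheses.} Here $np=\lambda\ge\delta k\to\infty$. We have $k/(np)=k/\lambda\le 1/\delta$, so $k\in\mathcal{P}_{k,p}(c_1)$ with $c_1=1/\delta<1$. Since $\delta>1$ and $k\to\infty$, $\lambda-N\sqrt{\lambda q}\ge k+1$ for large $n$, hence $\Delta_n(N)\subseteq[k+1,n]$. Condition \eqref{010} reads $\lambda(\lambda/n)^k e^{-k^2q(1+o(1))/(2\lambda)}=o(1)$. This needs $p=\lambda/n$ bounded away from $1$, which is also needed for $p\in(0,p_0)$. I will therefore read the constraint $\lambda\le n$ as implicitly $\lambda\le p_0 n$, and note this. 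Under it, $\lambda p^k\le n p_0^{k}\cdot p_0^{\,\cdot}$ decays as long as $k$ grows faster than $\log n/\log(1/p_0)$; otherwise I would check \eqref{010} directly. Condition \eqref{010c}, $\lambda\cdot\lambda^{-k}(1-1/\lambda)^{\lambda+N\sqrt{\lambda q}-k}=o(1)$, is immediate since $\lambda^{1-k}\to0$.

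\textbf{Step 2: the leading factor.} Since $\lambda^2p^kc^k=\lambda^2(\lambda/n)^k\lambda^{-k}=\lambda^2/n^k$, the bound becomes
\[
\mathbf{E}[T_I]\le \frac{n^k}{\lambda^2}\cdot\frac{(1+o(1))}{c_\Phi(N)\,(1-c)^{\lambda+N\sqrt{\lambda q}-k}\,J_{M,n,k,p,N}/p^k}.
\]
For the crossover part, $(1-1/\lambda)^{\lambda+N\sqrt{\lambda q}-k}=\exp\{-1-N\sqrt q/\sqrt\lambda+k/\lambda+O(1/\lambda)\}\ge e^{-1}(1+o(1))$. The term $k/\lambda\ge0$ only helps, so it is simply discarded. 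This gives the factor $e$.

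For the mutation part, Definition~\ref{def:J_MnkpN} gives the inverse factor
\[
\sqrt{\tfrac{1-k/\lambda}{1-k/n}}\exp\Big\{\tfrac{kN\sqrt q}{\sqrt\lambda}+\tfrac{kN^2q}{2\lambda}+\tfrac{k^2(q+N\sqrt{pq}/\sqrt n)}{2\lambda(1-N\sqrt q/\sqrt\lambda-k/\lambda)}\Big\}.
\]

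\textbf{Step 3: the three cases.} If $k/\sqrt\lambda\to0$ with $N$ fixed, every term in the exponent is $o(1)$, and the square-root prefactor tends to $1$ because $k/\lambda\to0$. This gives $e(1+o(1))/c_\Phi(N)$.

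If $k/\sqrt\lambda=c_0^*+o(1)$, then $kN\sqrt q/\sqrt\lambda\to c_0^*N\sqrt q$ and $kN^2q/(2\lambda)\to0$. The last term tends to $\tfrac12{c_0^*}^2q$, since $k/\lambda\to0$ and $N\sqrt q/\sqrt\lambda\to0$. This gives the third line.

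If $k/\sqrt\lambda\to\infty$, I take $N=N(n)\to\infty$ slowly, with $N=o(k/\sqrt\lambda)$ and $N=o((\lambda q)^{1/6})$. Then $c_\Phi(N)\to1$. The terms $kN/\sqrt\lambda$, $kN^2/\lambda$ and the constant $1$ from $e$ are all $o(k^2/\lambda)$, and the perturbation $N\sqrt q/\sqrt\lambda$ in the denominator is $o(1)$. Hence everything is absorbed into $\exp\{k^2q(1+o(1))/(2\lambda(1-k/\lambda))\}$, while the square-root prefactor stays exact.

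\textbf{Main obstacle.} The delicate step is the last case: Lemma~\ref{le01} and Theorem~\ref{Theo1} are stated for $N\in\mathbb{N}$ fixed. I expect to check that their proofs go through verbatim for slowly growing $N$, using \eqref{1} with $N=o((npq)^{1/6})$. I also need to verify that \eqref{010} still holds uniformly when $\lambda$ is close to $n$.
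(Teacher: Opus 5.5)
Your route is the paper's own. You specialise Theorem~\ref{Theo1}$(I)$ to $p=\lambda/n$, $c=1/\lambda$, use $\lambda^{2}p^{k}c^{k}=\lambda^{2}/n^{k}$ and $(1-c)^{\lambda+N\sqrt{\lambda q}-k}=e^{-1+k/\lambda}(1+o(1))$, and read the three regimes off $J_{M,n,k,p,N}$. The genuine gap is the step you postponed, the verification of \eqref{010}, and it cannot be closed.

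When $k/\sqrt{\lambda}\to0$, the exponential factor in \eqref{010} tends to $1$. So \eqref{010} becomes $\lambda^{k+1}=o(n^{k})$, and this fails even under your restriction $\lambda\le p_{0}n$. A concrete case is $\lambda=\lfloor n/2\rfloor$, $k=\lfloor\log\log n\rfloor$: all hypotheses hold and $k/\sqrt{\lambda}\to0$.

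There the first line of \eqref{ee} is actually false. An iteration produces $\mathrm{x_{m}}$ only if one of the $\lambda$ crossover offspring equals it. Given $\ell$, this has probability at most $\lambda c^{k}(1-c)^{\ell-k}$, so by Proposition~\ref{le000}
\[
p_{\lambda_{M},\lambda_{C}}(n,k,p,c)\le\lambda c^{k}(1-c)^{-k}(1-pc)^{n}=\lambda^{1-k}e^{-1}(1+o(1))\qquad(k/\lambda\to0).
\]
Hence $\mathbf{E}[T_{I}]\ge e\lambda^{k-1}(1+o(1))$. This exceeds $\frac{n^{k}}{\lambda^{2}}\cdot\frac{e(1+o(1))}{c_{\Phi}(N)}$ by the factor $c_{\Phi}(N)\lambda^{k+1}/n^{k}\approx c_{\Phi}(N)\,n\,2^{-k-1}\to\infty$.

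So the first regime needs an extra hypothesis such as $\lambda^{k+1}=o(n^{k})$. This is the analogue of the split at $\lambda=(2n)^{k/(k+1)}$ in Theorem~11 of \cite{ADK22}. Above that threshold, Theorem~\ref{Theo1}$(II)$ under \eqref{010e00} gives $\mathbf{E}[T_{I}]\le e\lambda^{k-1}(1+o(1))/c_{\Phi}(N)$ instead.

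In the other two regimes (with $c_{0}^{*}>0$), \eqref{010} does follow from $p\le p_{0}<1$:
\begin{itemize}
\item if $\lambda\le\sqrt{n}$, then $\lambda(\lambda/n)^{k}\le n^{-(k-1)/2}$;
\item if $\lambda>\sqrt{n}$, then $k\ge c\sqrt{\lambda}>cn^{1/4}$ for some $c>0$, so $\lambda p^{k}\le np_{0}^{k}\to0$.
\end{itemize}
The paper's proof has the same hole: it cites only \eqref{010c} and never checks \eqref{010}.

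Two smaller points. First, your ``main obstacle'' is self-inflicted. In the regime $k/\sqrt{\lambda}\to\infty$ you can keep $N$ fixed, as the paper does. The quantities $c_{\Phi}^{-1}(N)$, $e^{1-k/\lambda}$, $kN\sqrt{q}/\sqrt{\lambda}$ and $kN^{2}q/(2\lambda)$ contribute only $O(1)+o(k^{2}/\lambda)$ to the exponent. That is absorbed into $\exp\{k^{2}q(1+o(1))/(2\lambda(1-k/\lambda))\}$, since $q\ge1-p_{0}$ and $k^{2}/\lambda\to\infty$. No version of Lemma~\ref{le01} with growing $N$ is needed.

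Second, checking $\Delta_{n}(N)\subseteq[k+1,n]$ is not enough when $\lambda$ is below about $2k$, which $\delta<2$ allows. For $k<l<2k$, the mutant that flips all $k$ zeros lands in the fitness valley with fitness $l-k$, the smallest of all mutants. It is therefore essentially never selected as $\mathrm{x}'$, and $p_{M}^{(\lambda_{M})}p_{C}^{(\lambda_{C})}$ is no longer the success probability. The argument really needs $\lambda-N\sqrt{\lambda q}\ge 2k$ (compare $\lambda\ge 2k$ in \cite{ADK22}).
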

\begin{proof} 
If the conditions of the theorem are met, the identity 
\begin{equation} \label{cole0100}
J_{M,n,k,p,N}=\frac{\sqrt{\frac{1-k/n}{1-k/\lambda}} p^{k}}{\exp\left \{\dfrac{kN\sqrt{q}}{\sqrt{\lambda}}+\dfrac{kN^{2}q}{2\lambda}+\dfrac{k^{2}(q+N\sqrt{q\lambda}/n)}{2\lambda\big(1-N\sqrt{q}/\sqrt{\lambda}-k/\lambda \big)}\right \}}
\end{equation}
holds true.

When proving an estimate of type \eqref{ee}, identity \eqref{cole0100} can be used, but the result will be very cumbersome. Therefore, in estimate \eqref{ee} we single out only a number of special cases with compact relations in the right-hand side.

From the representations \eqref{cole0100} and \eqref{010c}, the estimates 

\begin{eqnarray}  \label{ee11}
J_{M,n,k,p,N}=\left(\frac{\lambda}{n}\right )^{k}\left\{
\begin{array}{ll}
(1+o(1)) , &\text{if\ }  \frac{k}{\sqrt{\lambda}}=o(1),\\
\sqrt{\frac{1-k/n}{1-k/\lambda}} \exp\left \{{-}\frac{k^{2}q(1+o(1))}{2\lambda(1-k/\lambda)}\right \}, &\text{if\ } \frac{k}{\sqrt{\lambda}}\to\infty,\\
e^{-c_{0}^{*}N\sqrt{q}-0.5{c_{0}^{*}}^{2} q}(1+o(1)), &\text{if\ } \frac{k}{\sqrt{\lambda}}{=}c_{0}^{*}+o(1),
\end{array}
\right .\\  \label{ee12}
J_{C,n,k,p,c,N}=\lambda^{-k}e^{k/\lambda-1}(1+o(1)).\qquad\qquad\qquad\qquad
\end{eqnarray}
are valid under the conditions of the theorem.
In the last representation we take into account the sequence of 
inequalities
\begin{eqnarray*}
&&(1-c)^{np+N\sqrt{npq}-k}=\exp\{\ln(1-c)(np+N\sqrt{npq}-k)\}\\
&=&\exp\{-c(np+N\sqrt{npq}-k)(1+o(1))\}=
e^{-1+k/\lambda}(1+o(1)), \ \lambda\to\infty.
\end{eqnarray*}

Therefore, using representations \eqref{ee11} and \eqref{ee12}  
we obtain
\begin{equation*} 
p_{\lambda_{M},\lambda_{C}}(n,l,k,p,c)\geq 
\frac{\lambda^{2}}{n^{k}}\left\{
\begin{array}{ll}
e(1+o(1)) , &\text{if\ }  \frac{k}{\sqrt{\lambda}}=o(1),\\
\sqrt{\frac{1-k/n}{1-k/\lambda}} \exp\left \{{-}\frac{k^{2}q(1+o(1))}{2\lambda(1-k/\lambda)}\right \}, &\text{if\ } \frac{k}{\sqrt{\lambda}}\to\infty,\\
e^{-c_{0}^{*}N\sqrt{q}-0.5{c_{0}^{*}}^{2} q+1}(1{+}o(1)), &\text{if\ } \frac{k}{\sqrt{\lambda}}{=}c_{0}^{*}{+}o(1),
\end{array}
\right .
\end{equation*}

To complete the theorem proof we need to use the representations \eqref{TI0}. The first and third lines of relations \eqref{ee} do not have an explicit dependence on $N$, i.e. they are true for any fixed $N$. Therefore, instead of  $c_{\Phi}^{-1}(N)$, we can write $1+o(1)$.
\end{proof} 
%{\color{red}
Let us formulate Theorem~11~\cite{ADK22} in order to compare it to Theorem~\ref{Theo5}.\\

\noindent\textbf{Theorem 11 \cite{ADK22}.} \textit{Let $k\in[2,\cdots,\lfloor n/4\rfloor]$. Assume that $p=\lambda/n$, $c=1/\lambda$ and $\lambda_{M}= \lambda_{C}=\lambda$ for some $\lambda\in[2k,n]$.}

\textit{If $\lambda\leq (2n)^{k/(k+1)}$, then the expected runtime of $(1+(\lambda,\lambda))$ GA on $\jump_k$ is $\mathbf{E}[T_{F}]\leq O(2^{k}n^{k}\lambda^{-2})$ fitness evaluations and $\mathbf{E}[T_{I}] \leq O(2^{k}n^{k}\lambda^{-1})$ iterations.
}

\textit{If $\lambda\geq (2n)^{k/(k+1)}$, then the expected runtime of $(1+(\lambda,\lambda))$ GA on $\jump_k$ is $\mathbf{E}[T_{F}]\leq O(\lambda^{k-1})$ fitness evaluations and $\mathbf{E}[T_{I}] \leq O(\lambda^{k})$ iterations.\\
}

The second part of the statement of Theorem 11 is a special case of its first part. Therefore, to compare its results with our Theorem~\ref{Theo5} , we need to evaluate the ratio of $2^{k}$ and our $\tilde{J}:=\mathbf{E}[T_{I}]\frac{\lambda^{2}}{n^{k}}$ from \eqref{ee}. It is obvious that $2^{k} \gg\tilde{J}$ when $k/\lambda=o(1)$. 

The case $\lambda=O(k)$ appears to  be of little interest for applications
%, and comparing the results for this case requires additional analysis,
so it is omitted here.

\section{Conclusions}

We have presented the runtime analysis of the $(1+(\lambda,\lambda))~GA$ on $\jump_k$ fitness functions from the prospective of the de Moivre–Laplace theorem. 
We assumed that the $(1+(\lambda,\lambda))~GA$ has tunable parameters of mutation rate~$p$, crossover bias~$c$, and two intermediate population sizes~$\lambda_M$ and~$\lambda_C$, and it starts from a local optimum of $\jump_k$ fitness function. 
The main result of this work (Theorem~\ref{Theo1}) is a refined upper bound on the escape time from the local optimum, known from~\cite{ADK22}. 
Different cases of asymptotic behaviour of parameters $k, p, c$ are considered in Theorems~\ref{Theo2} and~\ref{Theo5}.

Summing up the obtained results, we conclude that 
universal estimates for all admissible
parameter values of the $(1+(\lambda,\lambda))~GA$ on $\jump_k$
have a natural drawback -- they are rather crude. 
We have presented more precise
estimates, focused on specific regions of admissible parameter 
values.
In particular, we identified a rather simple case of finite~$k$
and studied it in Theorem~\ref{Theo01}.
The rest of our results are presented under the condition 
that $k \to \infty $, which also implies that $np \to \infty$.

We hope that in the future research the obtained results will be helpful 
for finding tight asymptotic expressions for the expected runtime
of  the $(1+(\lambda,\lambda))~GA$ on $\jump_k$ and
they may be used for further optimization of parameters in the 
$(1+(\lambda,\lambda))~GA$.

\end{document}

%% file: preamble.tex
\usepackage{amsfonts}
\usepackage{amsmath}
\usepackage{url}

\usepackage{xspace}                           % "intelligent" spaces
\usepackage{booktabs}                         % pretty tables
\usepackage{marginnote}

\usepackage{color} % this is disallowed by AAAI, so remember to turn off in camera-ready

\allowdisplaybreaks

\DeclareMathOperator*{\Bin}{Bin}

\usepackage{mathrsfs}

\usepackage{bbm}

\usepackage{algorithm,algorithmic}

\floatname{algorithm}{Алгоритм}

\newcommand{\leadingones}{\text{\sc LeadingOnes}\xspace}

\newcommand{\onemax}{\text{\sc OneMax}\xspace}

\newcommand{\jump}{\text{\sc Jump}\xspace}

\usepackage{enumitem}